\documentclass{article}

\PassOptionsToPackage{numbers, compress}{natbib}

\usepackage[preprint]{neurips_2026}

\makeatletter
\renewcommand{\@toptitlebar}{%
  \vspace{-20pt}
  \par\noindent
  \begin{minipage}{\textwidth}
    \includegraphics[height=1.0cm]{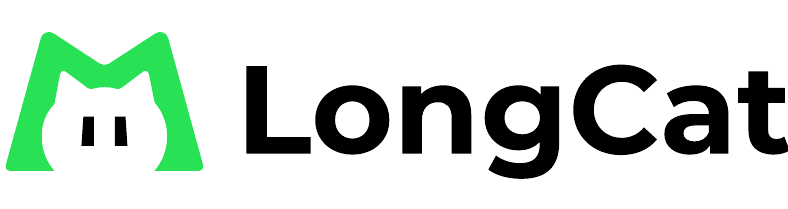}
  \end{minipage}
  \par\vspace{5pt}
  \hrule height 1pt
  \vskip .25in
}

 \usepackage[preprint]{neurips_2026}

\usepackage[colorlinks=true, citecolor=blue]{hyperref}
\usepackage[utf8]{inputenc} 
\usepackage[T1]{fontenc}    
\usepackage{hyperref}       
\usepackage{url}            
\usepackage{booktabs}       
\usepackage{amsfonts}       
\usepackage{nicefrac}       
\usepackage{microtype}      
\usepackage{xcolor}         
\usepackage{subfiles}
\usepackage{graphicx}
\usepackage{makecell}
\usepackage{amsmath}
\usepackage{pifont}
\usepackage{multirow}
\usepackage{caption}
\usepackage{enumitem}
\usepackage{algorithmic}
\usepackage{wrapfig}
\usepackage{multicol}
\usepackage[linesnumbered,ruled,vlined]{algorithm2e}
\usepackage{cleveref}
\usepackage{colortbl}

\newtheorem{theorem}{Theorem}
\newtheorem{lemma}{Lemma}
\newtheorem{proof}{Proof}

\newtheorem{corollary}{Corollary} 

\title{DORA: A Scalable Asynchronous Reinforcement
Learning System for Language Model Training}

\makeatletter
\renewcommand{\thefootnote}{\fnsymbol{footnote}}
\makeatother

\author{%
Tianhao Hu\footnotemark[2], Xiangcheng Liu\footnotemark[2], Yuchun Miao\footnotemark[2], Youshao Xiao\footnotemark[2], Hongyu Zang, Yang Zheng\\ \textbf{Xuan Huang}, \textbf{Jinrui Ding}, \textbf{Yufei Zhang}, \textbf{Yu Yang}, \textbf{Yi-Kai Zhang}, \textbf{Yueqing Sun}\\ \textbf{Chengcheng Han}, \textbf{Xiandi Ma}\footnotemark[1], \textbf{Wei Wang}, \textbf{Qi Gu}\footnotemark[1], \textbf{Yerui Sun}, \textbf{Yuchen Xie}, \textbf{Xunliang Cai}\\
Meituan Longcat Team\\
\texttt{maxiandi02@meituan.com}, \texttt{guqi03@meituan.com}
}

\begin{document}

\maketitle

\footnotetext[2]{Equal contribution.}
\footnotetext[1]{Corresponding authors.}

\makeatletter
\renewcommand{\thefootnote}{\arabic{footnote}}
\setcounter{footnote}{0}
\makeatother

\begin{abstract}
Reinforcement learning (RL) has become a critical paradigm 
for large-scale post-training \textcolor{black}{of LLMs} in industrial settings,
yet it faces a structural \emph{long-tail dilemma}:
rollout efficiency is bottlenecked by \textcolor{black}{the longest trajectories, which are often the most valuable for RL training.} 
Existing approaches alleviate this dilemma at the cost of either system overhead (e.g., re-prefill in partial-rollout methods) or algorithmic compromises (e.g., discarded long trajectories in replication-based methods). 
Both approaches implicitly assume that all rollout instances must synchronize around a single policy version, creating an unavoidable batch barrier.
We propose \textbf{DORA} (\textbf{D}ynamic \textbf{OR}chestration for \textbf{A}synchronous Rollout), 
which breaks this assumption by maintaining multiple policy versions concurrently within the rollout cluster to solve the skewed generation problem without \textcolor{black}{algorithmic} compromise. 
DORA combines three mechanisms: \emph{multi-version streaming training} that decouples trajectory completion from batch barrier, a centralized \emph{load-balancing orchestrator} that re-partitions resources across versions, and nearly \emph{zero-re-prefill migration} that transfers KV-Cache directly across same-version instances. Experiments on open-source benchmarks show that DORA achieves up to $2.12\times$ end-to-end throughput improvement and $8.2\times$ rollout-stage acceleration over synchronous training while preserving convergence parity. 
\textbf{In real-world production deployment} with thousands of accelerators, 
DORA achieves up to $6.2\times$ rollout speedup,
successfully training
a ${\sim}$500B-parameter MoE model (i.e., \textbf{LongCat-Flash-Thinking}) competitive with 
state-of-the-art open-source LLMs.

\end{abstract}

\section{Introduction}\label{sec:intro}
Reinforcement Learning (RL) has become a pivotal paradigm for LLM post-training, leveraging test-time scaling~\citep{snell2024scaling} to advance complex reasoning and agentic capabilities~\citep{Claude-Opus-4.5,openai_o1,deepseek-math-v2,team2025introducing,team2026longcat}. The RL training loop sequentially cycles through rollout (trajectory generation), experience preparation (reward and reference computation), and model training. Among these \textcolor{black}{stages}, rollout accounts for 50\%--80\% of the total step duration and represents the fundamental training bottleneck~\citep{team2025longcat,wu2025llamarl,xiao2023adaptive}. As industrial deployments scale to thousands of accelerators, optimizing rollout efficiency directly determines the overall training cost and turnaround time.


This rollout bottleneck is fundamentally driven by a \textbf{\textit{long-tail dilemma}} inherent to complex reasoning tasks: a direct conflict between algorithmic value and hardware efficiency. In mathematics and coding domains, response lengths follow a highly skewed distribution where the $99$th-percentile output can exceed the median by over an order of magnitude (Figures~\ref{fig:resp_in_house} and~\ref{fig:resp_in_house_2}). The dilemma arises because these exceptionally long trajectories carry the highest information density---encapsulating the intricate chain-of-thought (CoT) reasoning steps that are the primary source of emergent capabilities~\citep{openai_o1,guo2025deepseek}---making them indispensable for RL training. However, because decoding is inherently memory-bound, we cannot simply accelerate them by scaling compute, nor can we discard them without catastrophic algorithmic degradation. Consequently, under standard synchronous training, the entire batch is held hostage by a small fraction of these most valuable, yet longest trajectories, leaving the majority of devices completely idle.


To address this dilemma, asynchronous training has been proposed, with current efforts primarily focusing on two directions. \textit{Replication-based methods}~\citep{gao2025rollpacker,zhang2025sortedrl} shorten rollout duration by oversampling and dropping the in-flight long trajectories once enough complete. This \textit{discards} precisely the chain-of-thought trajectories highlighted above, and the resulting \textit{length-biased} distribution distorts the advantage estimation in group-relative methods such as GRPO~\citep{shao2024deepseekmath}. \textit{Partial-rollout methods}~\citep{team2025kimi,wu2025llamarl,fu2025areal,slime_github} segment long trajectories at each weight update and resume them under the new policy. On the system side, every update invalidates the KV-Cache and forces a \textit{full re-prefill} that grows dramatically with context length (Figure~\ref{fig:prompt_in_house}) and is further amplified in MoE architectures. On the algorithmic side, a trajectory now stitches multiple policy versions, potentially downgrades the rollout quality, and departs from the standard RL formulation, \textcolor{black}{risking model performance degradation.} Across both directions, existing approaches \textbf{alleviate the long-tail dilemma, but at the cost of either additional system overhead or algorithmic compromises}.

We argue that these tradeoffs stem from a common implicit assumption---\textbf{\textit{single-version rollout}}, i.e., the rollout instances only serve a single policy version. Under this assumption, in-flight long-tailed trajectories must either complete before the next policy update or be sacrificed at the update---leaving no room for solutions that avoid both system overhead and algorithmic compromises.

In this paper, we \textcolor{black}{depart from} this regime through \textbf{DORA} (\textbf{D}ynamic \textbf{OR}chestration for \textbf{A}synchronous Rollout), which embodies a \textbf{\textit{multi-version rollout}} paradigm where multiple policy versions coexist within the rollout instances, \textbf{resolving the long-tail dilemma without incurring significant system overhead or algorithmic compromises}. At its core, each trajectory is generated entirely under the policy version active at its dispatch, so that long-tailed trajectories run to completion under their original version while new requests proceed under the latest version.

Realizing this design, however, raises three system-level challenges, each addressed by a corresponding mechanism. \textbf{Version coexistence.} Maintaining multiple policy versions concurrently requires breaking the synchronous batch barrier so that completed trajectories can flow into training without waiting for the slowest ones. DORA achieves this through \textit{multi-version streaming training}, which dispatches and collects trajectories at the granularity of individual requests across versions, with a sliding window that bounds policy staleness. \textbf{Resource fragmentation.} As trajectories of older versions complete, their rollout instances become progressively underutilized while the latest version is over-subscribed. DORA addresses this through a centralized \textit{load-balancing orchestrator}, which continuously re-partitions data-parallel groups across versions in proportion to their pending workloads and migrates requests to rebalance the cluster. \textbf{Migration overhead.} Naively migrating a request across instances would re-trigger the prefill phase, \textcolor{black}{which is costly in long-context and MoE settings.} DORA avoids this entirely: since the trajectory is generated under the consistent policy version across multiple RL steps, its KV-Cache states are mathematically equivalent across any instance hosting that version, enabling nearly \textit{zero-re-prefill migration} via direct cross-instance KV-Cache transfer. Together, these mechanisms enable DORA to deliver \textbf{substantial efficiency gains while maintaining standard RL convergence.} Our main contributions are summarized as follows:

\noindent$\bullet$ \textbf{Multi-Version Streaming Training.} We propose \textit{multi-version streaming training}, a new asynchronous paradigm that maintains multiple policy versions concurrently within the rollout instance, eliminating the long-tail bubble without introducing significant system overhead or algorithmic compromises.

\noindent$\bullet$ \textbf{Dynamic Orchestration.} We design a centralized \textit{load-balancing orchestrator} that dynamically re-partitions data-parallel groups across versions in proportion to their pending workloads and migrates requests to rebalance the cluster, eliminating resource fragmentation across coexisting policy versions.

\noindent$\bullet$ \textbf{Nearly Zero Re-prefill KV-Cache Reuse.} We design \textit{zero re-prefill migration} that transfers KV-Cache directly across same-version instances with negligible communication costs, eliminating prefill recomputation during request migration---especially beneficial for long-context reasoning and agentic scenarios.

\noindent$\bullet$ \textbf{Extensive Evaluation and Real-World Deployment.} On open-source benchmarks, DORA achieves up to $2.12\times$ end-to-end and $8.2\times$ rollout-stage speedup over synchronous training while preserving model convergence. \textit{Production deployment} with thousands of accelerators further yields up to $6.2\times$ rollout speedup, training a ${\sim}$500B-parameter MoE model competitive with 
state-of-the-art open-source LLMs.

\section{Preliminaries}\label{sec:preliminary}
\subsection{Asynchronous RL Training}
\label{sec:async_rl}

RL post-training for LLMs proceeds in iterative steps, each consisting of three stages: \emph{rollout} (sampling responses from the current policy), \emph{experience preparation} (computing rewards and references), and \emph{model training}. In synchronous training, a step cannot begin until all trajectories of the previous step are complete, enforcing a strict batch barrier between rollout and training. To overlap rollout and training, asynchronous methods relax this barrier and allow the training samples to mix trajectories generated under different behavior policy versions.

We take GRPO~\citep{shao2024deepseekmath}, a variant of PPO, as a representative algorithm. Given a prompt $x$, $G$ trajectories $\{y_i\}_{i=1}^G$ are sampled per prompt and used to update the policy $\pi_\theta$ via a clipped importance-weighted objective with group-relative advantages $\hat{A}_i$ shared across all tokens of trajectory $y_i$~\citep{guo2025deepseek,yu2025dapo}. Let $v(\cdot)$ denote the version index of a policy and $K$ a configurable upper bound on staleness. For trajectory $y_i$ under behavior policy $\pi_{w_i}$ updating training policy $\pi_\theta$, asynchronous training requires:
\begin{equation}
\label{eq:staleness}
v(\theta) - v(w_i) \;\leq\; K,
\end{equation}
which is the standard condition for convergence guarantees in asynchronous optimization~\citep{lian2015asynchronous,zheng2017asynchronous}. The asynchronous GRPO objective replaces the single behavior policy with the per-trajectory $\pi_{w_i}$ in the importance ratio:
\begin{equation}
\label{eq:grpo_async}
\mathcal{J}_\text{async}(\theta) =
\mathbb{E}\Bigg[
\frac{1}{G}\sum_{i=1}^G \frac{1}{L_i}\sum_{t=1}^{L_i}
\min\!\Big(
r_{i,t}(\theta)\hat{A}_{i,t},\;
\operatorname{clip}_\varepsilon\!\big(r_{i,t}(\theta)\big)\hat{A}_{i,t}
\Big)
\Bigg],
\quad
r_{i,t}(\theta)=\frac{\pi_\theta(y_{i,t}\mid\cdot)}{\pi_{w_i}(y_{i,t}\mid\cdot)},
\end{equation}
subject to Equation~\ref{eq:staleness}. Throughout the paper, we denote the rollout batch size (number of prompts dispatched per step) as $RBS$ and the training batch size (number of trajectories consumed by the training stage) as $TBS$.

\subsection{Long-Tail Dilemma}
\label{sec:longtail}
The core objective of asynchronous RL is to resolve the massive hardware idleness caused by long-tailed generation, which manifests at the system level through two compounding factors:

\begin{figure*}[t]
\begin{minipage}[t]{0.32\linewidth}
\includegraphics[width=1\textwidth]{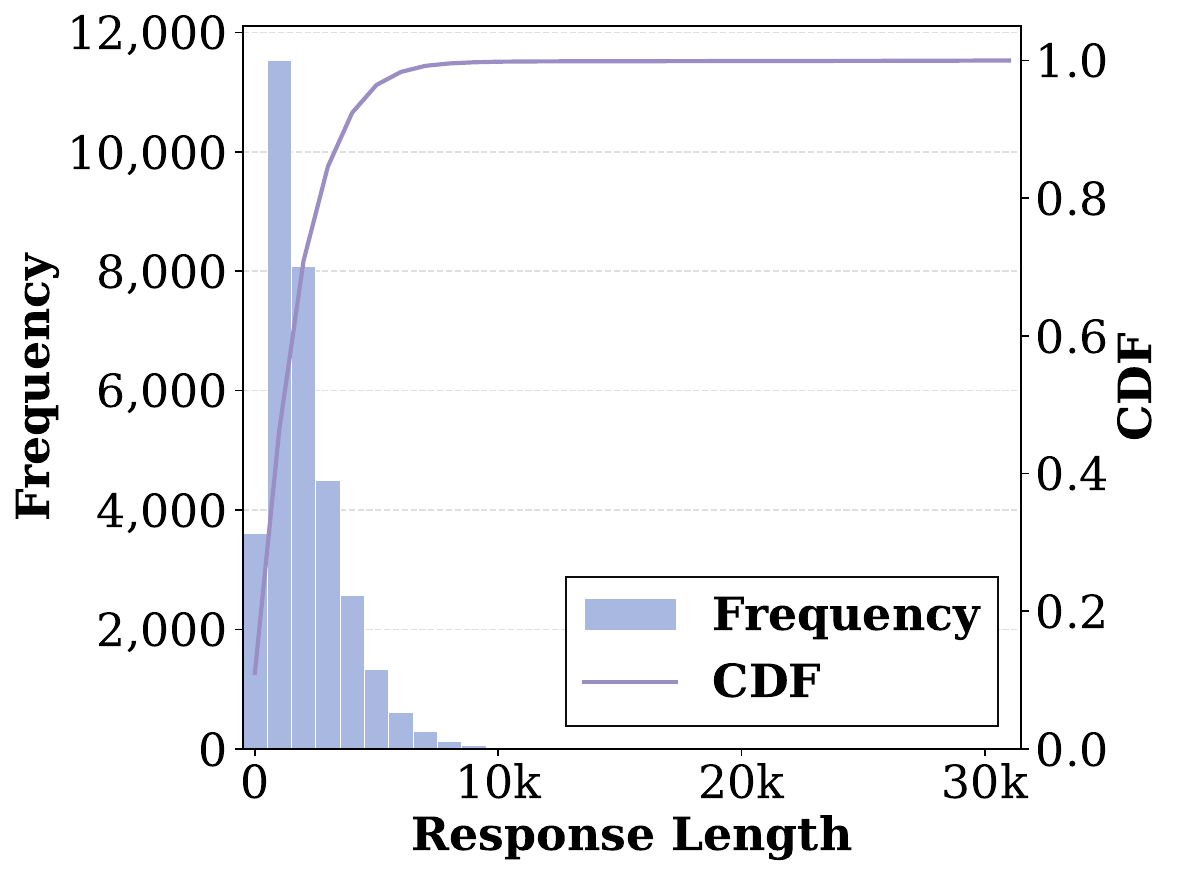}
\caption{Response length distribution on the DAPO-Math-17K dataset.}
\label{fig:resp_in_house}
\end{minipage}\hfill
\begin{minipage}[t]{0.32\linewidth}
\includegraphics[width=1\textwidth]{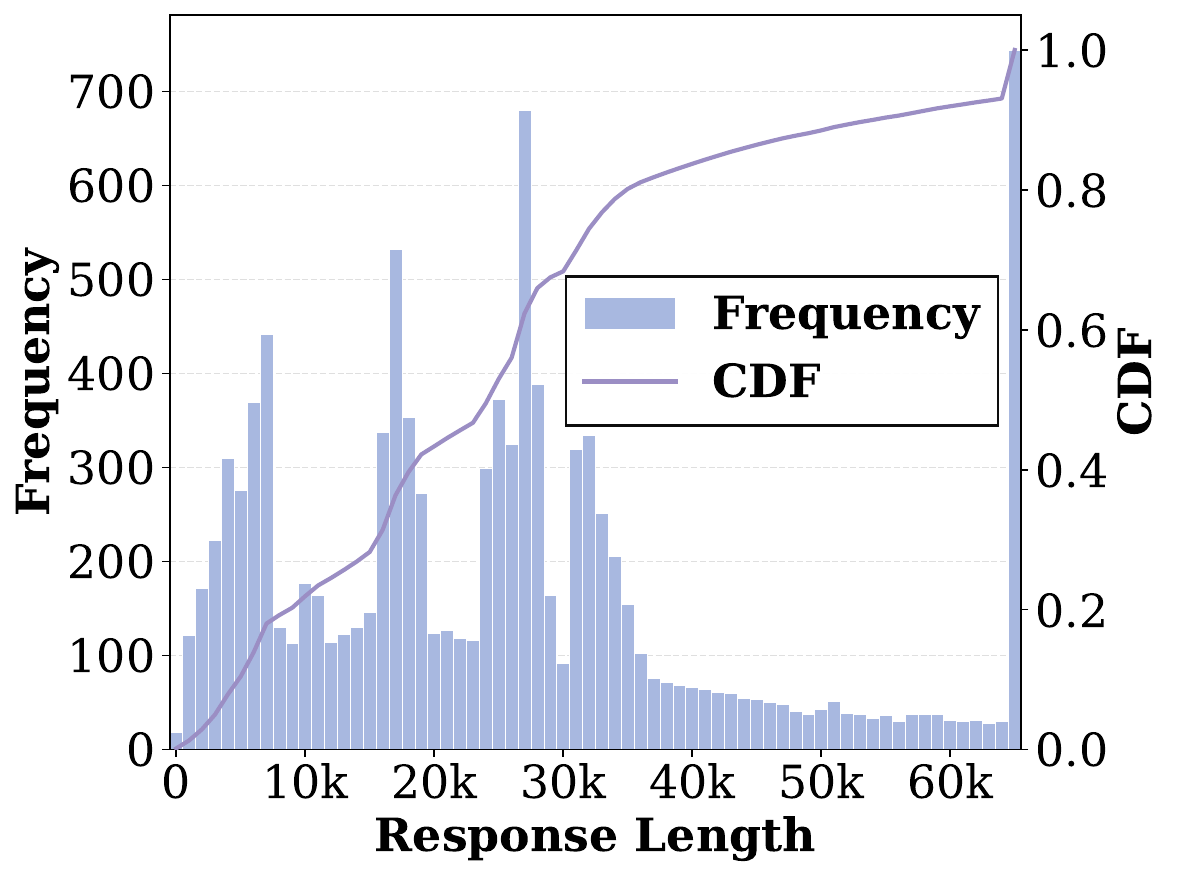}
\caption{Response length distribution in production. Stacked bar in the end of x-axis reflects overlong truncation.}
\label{fig:resp_in_house_2}
\end{minipage}\hfill
\begin{minipage}[t]{0.32\linewidth}
\includegraphics[width=1\textwidth]{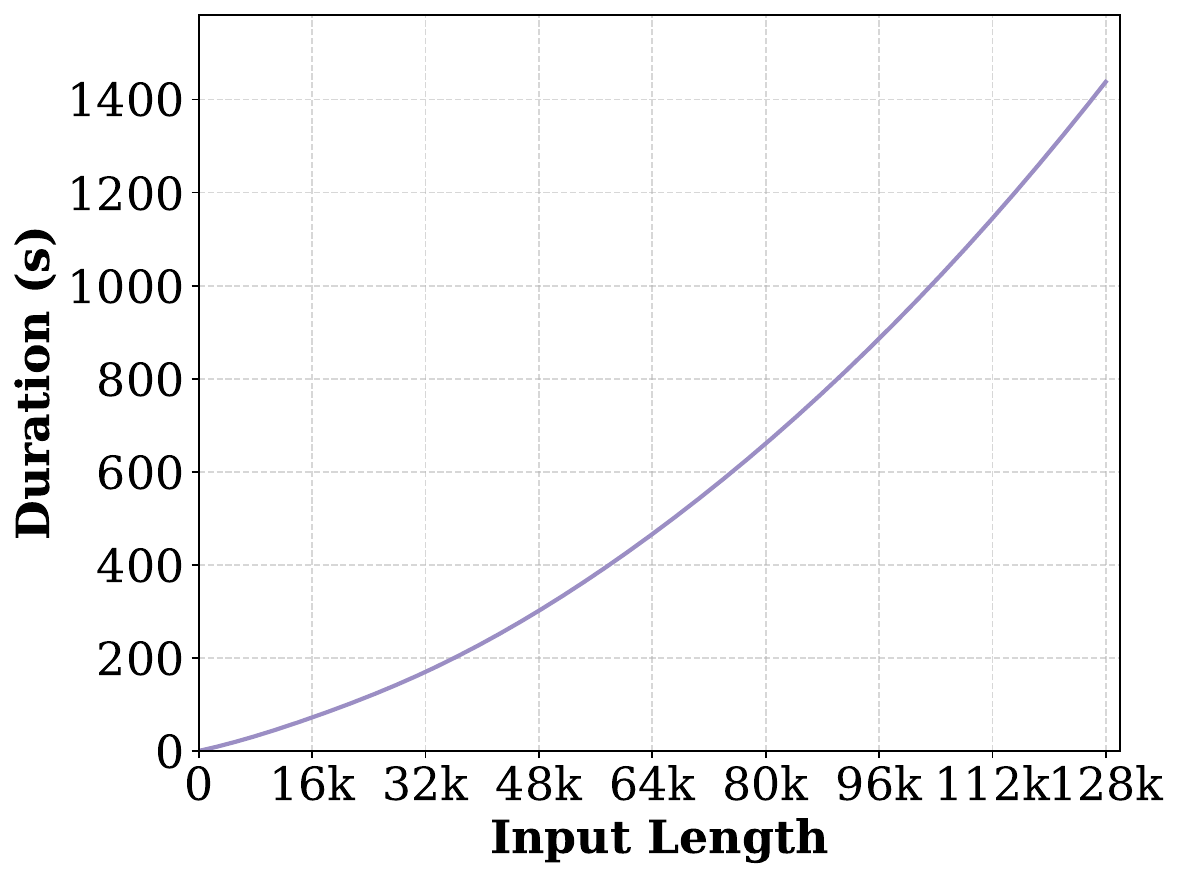}
\caption{Prefill duration of a ${\sim}$500B MoE model with expert parallelism size 128 on non-CUDA accelerators.}
\label{fig:prompt_in_house}
\end{minipage}
\end{figure*}

\textbf{Workload skewness.} In long-context reasoning workloads, response lengths $L_i$ follow a heavily long-tailed distribution. As shown in Figures~\ref{fig:resp_in_house} and~\ref{fig:resp_in_house_2}, the $99$th-percentile output exceeds the median by over an order of magnitude on both an open-source benchmark and a production workload. Because decode is memory-bound, this tail cannot be flattened by adding compute. Prefill cost compounds this issue: as shown in Figure~\ref{fig:prompt_in_house}, prefill duration grows dramatically with input length, making any technique that re-triggers prefill mid-step (e.g., re-prefill after weight updates) increasingly expensive in long-context regimes.

\begin{figure*}[t]
\begin{minipage}[t]{0.48\linewidth}
\includegraphics[width=1\textwidth]{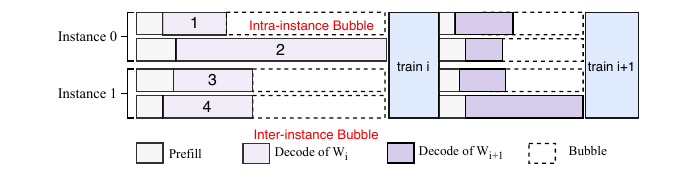}
\caption{Skewed bubbles in synchronous training: intra-node bubble (idle slots within a device) and inter-node bubble (faster instances waiting for the slowest).}
\label{fig:bubble_arch}
\end{minipage}\hfill
\begin{minipage}[t]{0.48\linewidth}
\includegraphics[width=1\textwidth]{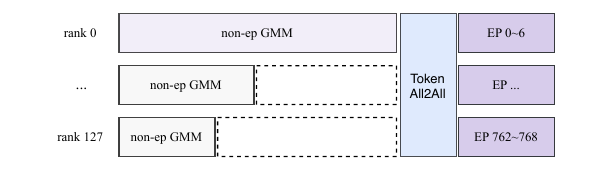}
\caption{Non-EP GMM workloads become unbalanced under long-tailed inputs (expert parallelism size $128$).}
\label{fig:moe_imbalance}
\end{minipage}
\end{figure*}

\textbf{Hardware-level idleness.} The rollout phase processes $RBS$ prompts per step, where each request incurs a compute-bound prefill and a memory-bound decode, with concurrency capped by accelerator memory occupied by model weights and KV-Cache. The objective is to minimize total step duration:

\begin{equation} \label{eq1}
    \min \; T_{\text{train}} + T_{\text{Prefill}} + T_{\text{Decode}}
    \;\Rightarrow\;
    \min \; T_{\text{train}} + T_{\text{Prefill}} +
    \overbrace{
        \tau \max_{j}
        \underbrace{
            \max_{i\in \text{Device}_{j}}\{L_{i}\}
        }_{\text{intra-node bubble}}
    }^{\text{inter-node bubble}}
\end{equation}
where $\tau$ is the time per output token (TPOT, approximately constant under fixed batch size for rollout engine). Combined with the long-tailed distribution above, this $\max\max$ structure in the rollout creates two forms of device idleness illustrated in Figure~\ref{fig:bubble_arch}: an \emph{intra-node bubble}, where completed slots on a device sit idle while a long-tailed request continues without device saturation, and an \emph{inter-node bubble}, where faster instances wait for the slowest one. In MoE architectures, this skew further propagates into non-EP layers, where the slowest rank stalls the entire EP group (Figure~\ref{fig:moe_imbalance}).

While long-tail trajectories bottleneck rollout efficiency, their rich learning signals make them indispensable for RL. This \emph{long-tail dilemma} exposes a fundamental conflict between hardware utilization and algorithmic integrity. Consequently, existing asynchronous methods cannot mitigate rollout bubbles without incurring \textcolor{black}{significant} re-prefill overhead or algorithmic sampling bias (Section~\ref{sec:intro}, Appendix~\ref{app:related_work}). The remainder of this paper introduces a system architecture that natively resolves this tension, \textcolor{black}{eliminating bubbles without incurring significant system overhead or sacrificing algorithmic fidelity.}




\section{DORA Design}\label{sec:methods}
\subsection{System Overview}
Without loss of generality, we present DORA using a disaggregated architecture, although it readily extends to the colocated architecture. As motivated in Section~\ref{sec:longtail}, our goal is to resolve the long-tail dilemma without paying either significant system overhead or additional algorithmic cost. DORA achieves this through three interlocking mechanisms:

\noindent\emph{(i) Multi-version streaming training} (Section~\ref{sec:Multi-version Asynchronous Training}) maintains multiple policy versions concurrently on rollout instances, enabling trajectory-level streaming that eliminates both intra-node and inter-node bubbles. Each trajectory is generated end-to-end under a single policy version, and the staleness across versions is bounded by a configurable window.

\noindent\emph{(ii) Dynamic resource orchestration} (Section~\ref{sec:load_balancing}) resolves the resource fragmentation inherent in multi-version rollout by dynamically re-partitioning DP groups and migrating requests, while preserving every sampled trajectory and respecting the staleness bound throughout.

\noindent\emph{(iii) KV-Cache reuse} (Section~\ref{sec:KV Cache Reuse}) turns the single-policy-per-trajectory design into a system-level advantage: because every token of a trajectory is generated under the same policy version, its KV-Cache states are mathematically equivalent across any instance hosting that version, enabling nearly zero-re-prefill migration during request relocation.

As shown in Figure~\ref{fig:dora_workflow}, these mechanisms cooperate at runtime through four cooperating components. A \texttt{RolloutManager} dispatches prompts to rollout instances, tagging each prompt with a policy version so that the entire trajectory is generated under that version. Completed trajectories stream into an asynchronous \texttt{TransferQueue} equipped with staleness monitoring. The \texttt{Trainer} consumes $TBS$ samples for experience preparation and model training, then synchronizes the latest weights with rollout instances. A \texttt{Load-balancing orchestrator} monitors per-version workloads and triggers resource re-partitioning and request migration as needed, preserving the intermediate execution state. These components run on different nodes and coordinate via Remote Procedure Call (RPC), while workers running on accelerators execute the actual tasks.

Together, these mechanisms eliminate the long-tail bubble in rollout while preserving every sampled trajectory and bounding policy staleness—\textcolor{black}{resolving the long-tail dilemma without significant system overhead or algorithmic compromise.}

\begin{figure}[t]
  \centering
  \includegraphics[width=0.70\textwidth]{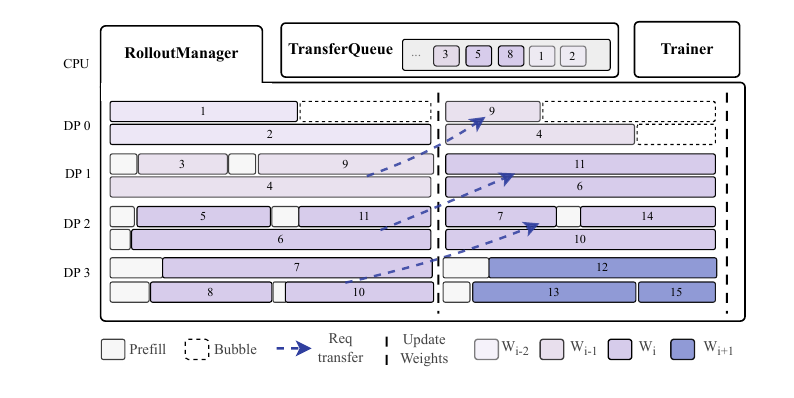}
  \vspace{-0.2cm}
  \caption{The execution timeline of DORA's multi-version streaming training system.}
    \label{fig:dora_workflow}
\end{figure}

\subsection{Multi-version Streaming Training}
\label{sec:Multi-version Asynchronous Training}

\textbf{Trajectory-level streaming.}
DORA eliminates the synchronous barrier by streaming completed trajectories directly to training without waiting for straggling trajectories. During the rollout phase, we maintain multiple versions of policy weights in the rollout instances, where each Data Parallel (DP) group hosts a single version of the policy weights. At the onset of each step, we overprovide the rollout prompts, where $RBS > TBS$ generation requests are dispatched to the rollout instances. Training begins as soon as $TBS$ samples are collected; unfinished long-tailed trajectories continue under their original policy version and flow into subsequent steps, ensuring the long trajectories are not abandoned or block the training. As illustrated in Figure~\ref{fig:dora_workflow}, the Rollout and training processes execute non-blockingly; only after a training iteration concludes does the \texttt{Trainer} notify the \texttt{RolloutManager} to synchronize the latest weights.

\textbf{Multi-version policy management.}
The key insight is that maintaining multiple policy versions concurrently allows long-tailed trajectories to continue under their original version while the training proceeds with completed trajectories. As illustrated in Figure~\ref{fig:dora_workflow}, each prompt is tagged with a version $w_j$ upon dispatch, ensuring $a_t \sim \pi_{w_j}(\cdot \mid s_t)$ for every token---so each trajectory is generated end-to-end under a single policy version without algorithmic modifications. For example, in Figure~\ref{fig:dora_workflow}, Trajectory~4 (a long-tail request under version $w_1$) spans two training steps while Trajectories~1--3 complete and stream into training during Step~1. The system proceeds to Step~2 with updated weights $w_2$ without waiting for Trajectory~4 to complete, which continues under its original version $w_1$ in a dedicated DP group. This allows the legacy-version requests to execute in parallel, thereby eliminating the inter-node bubble and fully utilizing all rollout instances.

\textbf{Sliding-window staleness control.}
Active versions are managed through a sliding window $W = \{w_{j}, \ldots, w_{j-K+1}\}$ of size $|W| \leq K$. The window advancement follows a strict protocol to control the staleness. The window slides forward only when all trajectories from the oldest version $w_{j-K+1}$ have been collected and forwarded to training. This provides a deterministic upper bound on policy staleness. The staleness bound $K$ serves as an explicit control knob for the convergence--throughput tradeoff: a smaller $K$ yields more on-policy data at the cost of rollout efficiency; a larger $K$ increases throughput with controlled convergence impact.

\textbf{Remaining challenges.}
While multi-version streaming preserves every sampled trajectory, bounds policy staleness, and partially alleviates both bubble types, it introduces two second-order efficiency challenges that motivate the subsequent mechanisms:

\emph{(i) Resource fragmentation.} \textcolor{black}{We observe that the legacy version's pending requests decrease monotonically as trajectories complete, yet its allocated resources remain fixed, leading to notable device underutilization.} This motivates the dynamic orchestration described in Section~\ref{sec:load_balancing}.

\emph{(ii) Re-prefill overhead.} Migrating requests across DP groups naively re-triggers the full prefill phase---prohibitive for long-context scenarios (64k--128k tokens). This motivates the KV-Cache reuse mechanism in Section~\ref{sec:KV Cache Reuse}, which exploits the mathematical equivalence guaranteed by single-version trajectory generation.

\begin{figure}[t]
  \centering
  \includegraphics[width=0.8\textwidth]{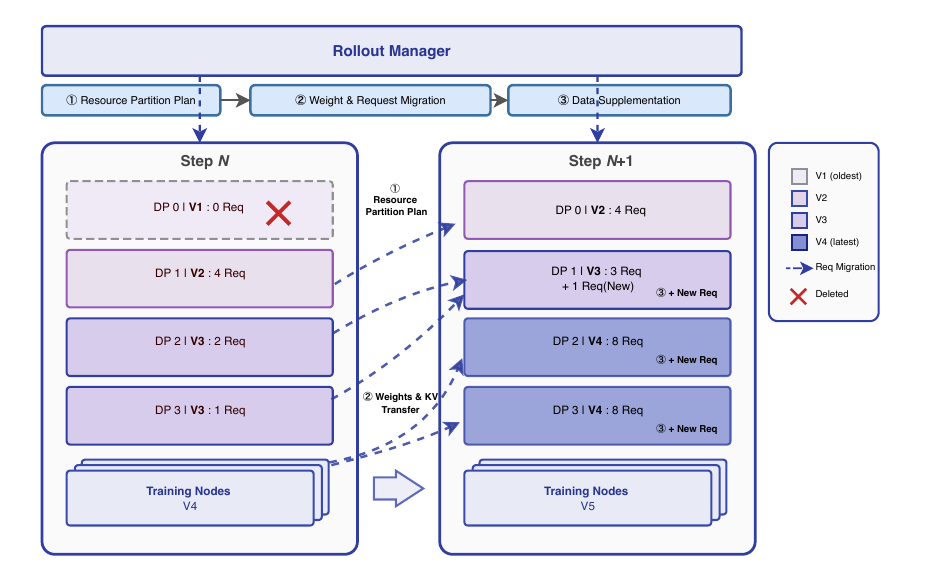}
  \vspace{-0.2cm}
  \caption{The workflow of the Dynamic Resource Orchestration.}
    \label{fig:load_balance}
\end{figure}

\subsection{Dynamic Resource Orchestration}
\label{sec:load_balancing}
Under multi-version streaming training, the pending request count of each legacy version $w$ decreases monotonically as trajectories complete, yet the physical resources (DP groups) assigned to $w$ remain fixed. This implies that static resource allocation leads to progressive underutilization---a legacy version's DP groups may each serve only one or two residual requests, while the latest version, which carries the majority of new prompts, is under-provisioned. Therefore, DORA requires proactively rebalancing workloads while simultaneously controlling staleness.

To resolve this resource fragmentation, DORA employs a centralized orchestrator that dynamically re-partitions resources across model versions. The orchestrator maintains real-time metrics---active request counts per version, KV-Cache utilization, and generation progress---and supports three re-balancing triggers: (1)~\textit{update-driven}, mandatory upon the completion of each training step to promote the new policy version; (2)~\textit{utilization-based}, activated when KV-Cache pressure exceeds a threshold, avoiding costly eviction-induced recomputation; and (3)~\textit{temporal-based}, periodic execution to prevent orphan requests from lingering in legacy versions.

As illustrated in Figure~\ref{fig:load_balance}, each re-balancing cycle executes three coordinated operations:
\begin{itemize}[leftmargin=1.2em, itemsep=2pt]
    \item \textbf{Resource partitioning plan.} The orchestrator assesses the distribution of active and pending requests across all maintained versions $W$ to produce a migration plan. It computes the target DP group count for each version $w \in W$ proportional to its current workload, preventing resources from being stranded on legacy versions with dwindling tasks. This addresses the inter-instance data skewness identified in Section~\ref{sec:longtail}.
    \item \textbf{P2P weight update and request migration.} Once the migration plan is determined, the orchestrator generates a mapping from the current partition to the target partition. For each version whose allocation changes, it leverages P2P weight transfers to rescale the DP groups---decrease the legacy versions and increase the latest one. Active requests on re-assigned nodes are migrated to their new DP groups with execution states fully preserved via KV-Cache reuse (Section~\ref{sec:KV Cache Reuse}). Notably, no trajectory is abandoned during this process.
    \item \textbf{Staleness-aware data supplementation.} To control data staleness within the configured bound, the orchestrator prioritizes the latest policy version for proactive data injection, dispatching supplemental prompts until the $RBS$ is fully met. This strategy maximizes sample freshness by ensuring that the majority of new trajectories are generated using the most recent model weights. Subsequently, to maintain high-watermark utilization across the cluster, the orchestrator performs opportunistic data injection following the request migration phase. Legacy versions are only supplemented with sufficient prompts to fill their residual idle slots. This tiered injection approach effectively saturates all rollout instances while preventing the over-production of stale trajectories, striking an optimal balance between hardware occupancy and algorithmic freshness.
\end{itemize}
This optimization cycle exemplifies algorithm-system co-design: \textbf{Proportional Resource Partitioning} resolves the resource fragmentation inherent in multi-version rollout; \textbf{Request Migration} preserves trajectory execution state without abandoning any sampled trajectory; and \textbf{Staleness-Aware Data Supplementation} ensures high hardware occupancy within the staleness bound.


\subsection{KV-Cache Reuse}
\label{sec:KV Cache Reuse}
\textcolor{black}{Note that request migration} across DP groups naively re-triggers the prefill phase. The re-prefill cost scales with the \textcolor{black}{growing} context length and is further amplified in MoE architectures~\citep{jiang2024mixtral,guo2025deepseek}, where it causes workload imbalance across non-MoE layers as shown in Section~\ref{sec:longtail}.
However, DORA's single-policy-per-trajectory design enables a powerful system-level optimization: since all tokens in a trajectory are generated by the same policy version $\pi_{w}$, the KV-Cache states are \emph{mathematically equivalent} across any physical instance hosting version $w$. This equivalence enables cross-instance KV-Cache transfer that completely avoids re-prefill. Methods that mix multiple policy versions within a single trajectory forfeit this optimization: each weight update forces a full re-prefill of all ongoing trajectories, up to the output length.
When the \texttt{Load-Balancing Orchestrator} triggers a resource re-allocation, DORA executes a coordinated two-phase state transfer:
\begin{itemize}[leftmargin=1.2em, itemsep=2pt]
    \item \textbf{Metadata forwarding.} Request metadata (request ID, generation state, decoded token count, and version tag) is transmitted via lightweight RPC. This control-plane transfer is negligible in both latency and bandwidth.
    \item \textbf{KV-Cache data transfer.} The voluminous KV Cache data—often comprising tens of gigabytes for long-context and MoE settings—is transferred using high-performance collective communication primitives, fully exploiting the available interconnect bandwidth.
\end{itemize}
\textbf{Locality-aware scheduling.}
To minimize transfer volume, the orchestrator prioritizes re-assigning requests back to their original ranks when possible---preserving data locality and avoiding physical migration entirely. Only requests that must relocate due to version transitions incur transfer costs.

\textbf{Hierarchical memory management.}
To alleviate VRAM pressure from aggregated requests during long-context training, DORA temporarily offloads KV-Caches to host memory~\citep{qin2024mooncake}, freeing device memory for active computations while preserving state for deferred generation. This hierarchical management safeguards system efficiency even under extreme long-tailed workloads.

By eliminating re-prefill from request migration, KV-Cache reuse closes the last source of system overhead introduced by multi-version coexistence: long-tailed trajectories now traverse legacy DP groups, get migrated under load-balancing, and continue generation---all without the prefill recomputation that would otherwise \textcolor{black}{grow} dramatically with context length. 

\section{Experiments}\label{sec:exp}

\begin{figure*}[t]
    \begin{minipage}[t]{0.33\linewidth}
        \includegraphics[width=1\textwidth]{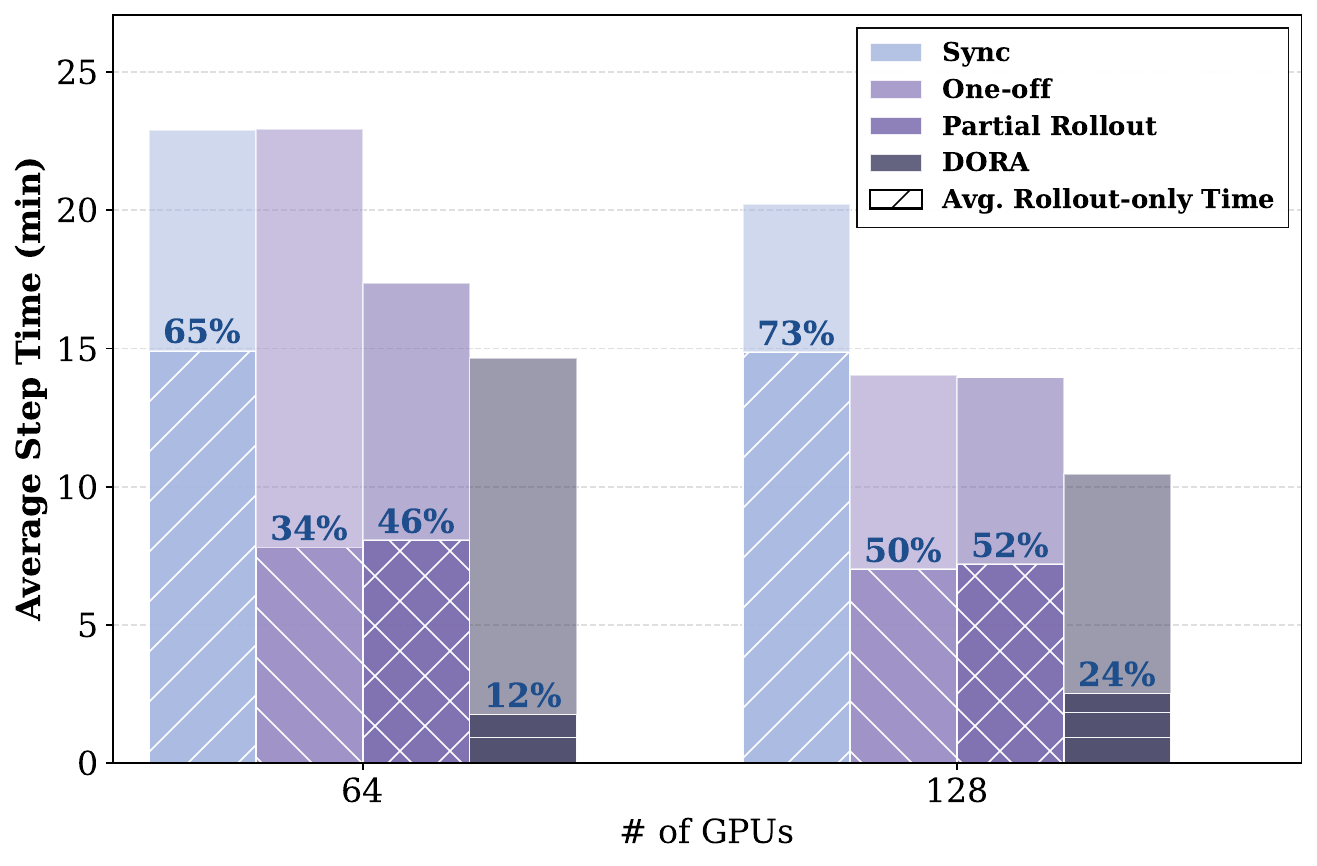}
        \captionsetup{width=0.95\textwidth}
        \caption{Average RL step time across different training paradigms on Dense-32B. }
        \label{fig:step_time}
    \end{minipage}
    \begin{minipage}[t]{0.33\linewidth}
        \includegraphics[width=1\textwidth]{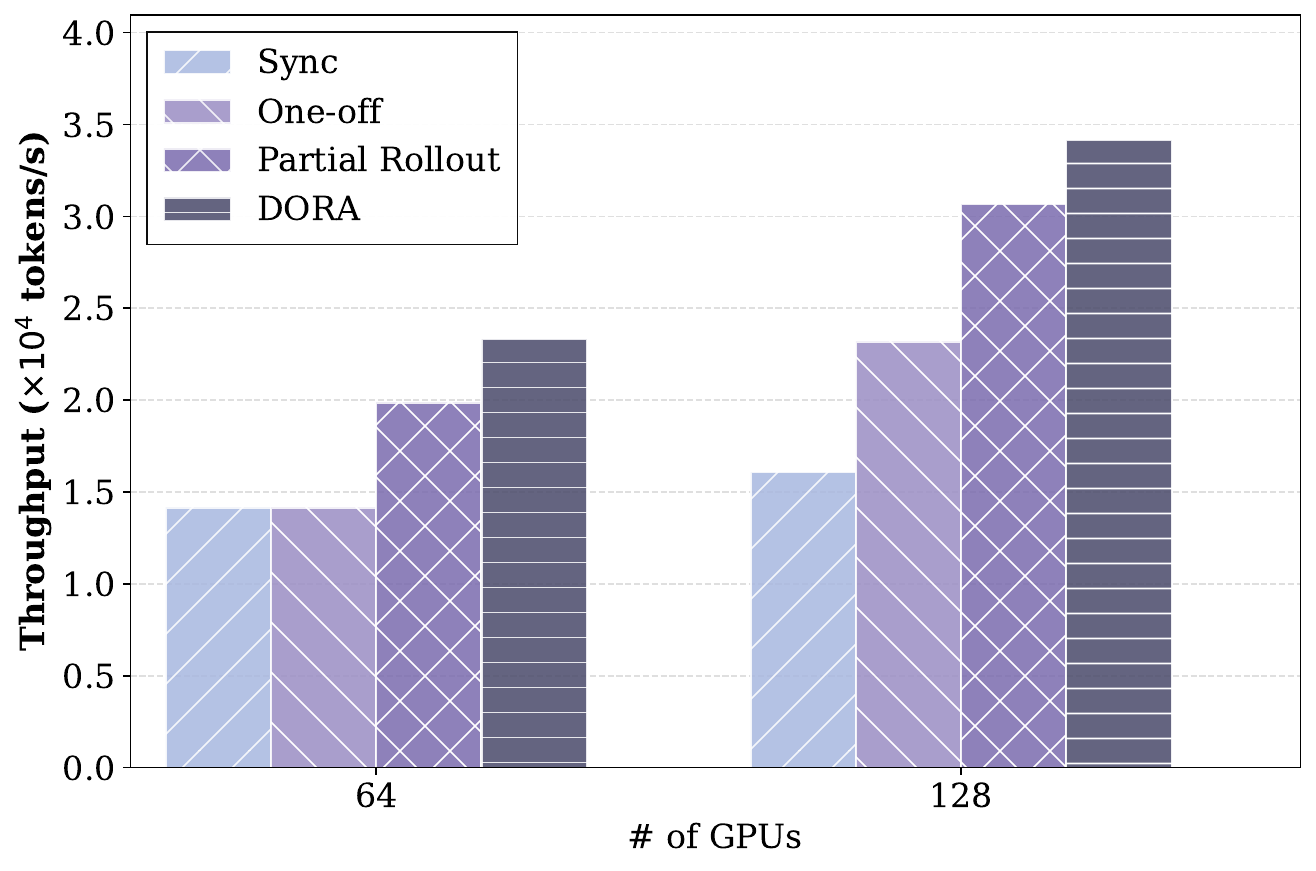}
        \captionsetup{width=0.95\textwidth}
        \caption{End-to-End Throughput across different training paradigms on Dense-32B.}
        \label{fig:throughput}
    \end{minipage}
        \begin{minipage}[t]{0.33\linewidth}
        \includegraphics[width=1\textwidth]{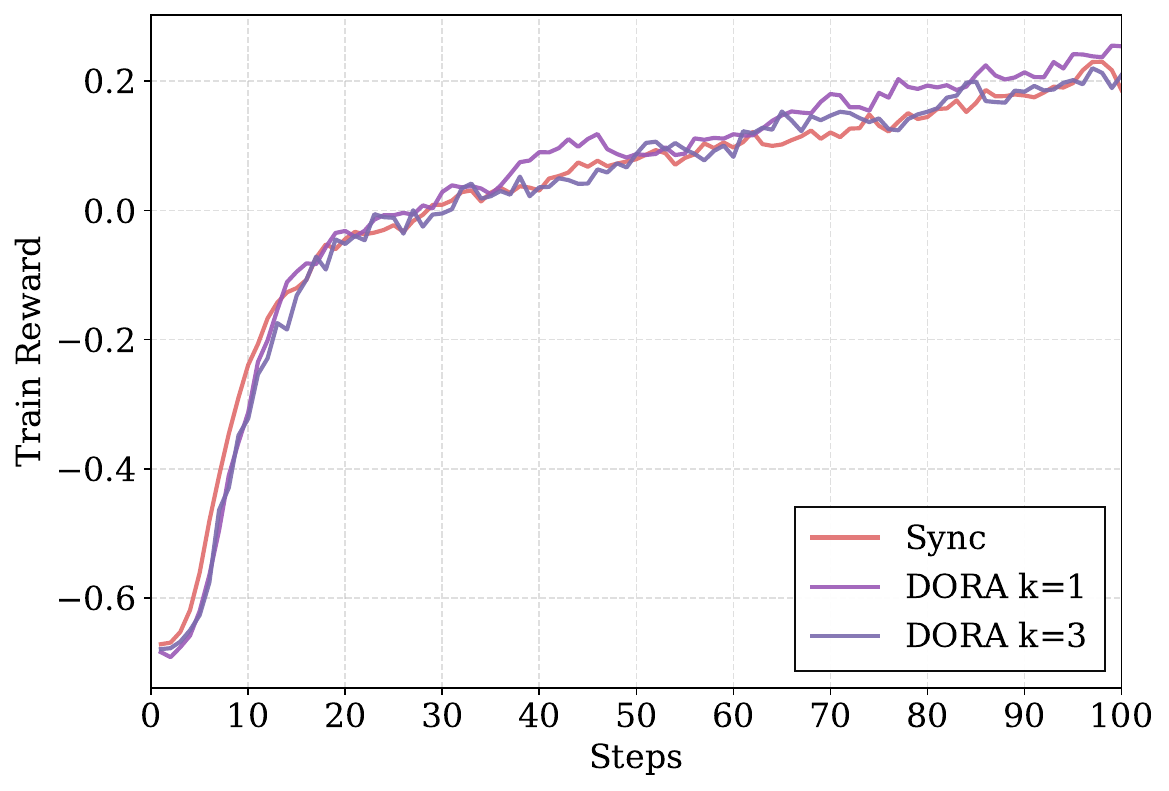}
        \caption{Training reward scores comparison for various training paradigms.}
        \label{fig:convergence}
    \end{minipage}
\end{figure*}

\subsection{Experimental Setup}
We evaluate DORA on a 16-node H800 cluster (128 GPUs) for open-source experiments and on a production cluster of non-CUDA accelerators for large-scale evaluation. Two model scales are used: Qwen2.5-32B~\citep{qwen2025qwen25technicalreport} for dense architectures and a ${\sim}$500B-parameter MoE model for production scale. We compare DORA against three representative paradigms spanning the constraint--efficiency landscape: (1)~\textit{Synchronous} (all constraints satisfied, batch-barrier limited); (2)~\textit{One-step off-policy} (overlaps stages but does not eliminate rollout bubbles); and (3)~\textit{Partial rollout in the colocated model placement} (eliminates bubbles by relaxing single-version generation, requiring algorithmic corrections). All baselines are implemented in the same in-house framework on identical hardware. Full hardware specs, software stack, dataset, and training hyperparameters are in Appendix~\ref{app:setup}.

\subsection{Training Performance}
\label{sec:training_efficiency}
\label{sec:Training Time}
\label{sec:e2e Throughput}

 
        
\textbf{Rollout Acceleration.}
DORA achieves consistent gains across both 64 and 128 GPUs. As shown in Figure~\ref{fig:step_time}, on 64 GPUs, DORA shrinks the rollout-only phase—\textcolor{black}{the portion that cannot be overlapped with training}—from 65\% of the step time under synchronous training to merely 12\%, an \textbf{8.2$\times$} reduction in absolute duration (14.9\,min $\to$ 1.8\,min). The end-to-end step time correspondingly drops by \textbf{1.56$\times$}. This compression directly resolves the long-tail dilemma identified in Section~\ref{sec:intro}: long-tailed trajectories continue under their legacy policy versions in dedicated DP groups while new requests saturate the released resources, eliminating both the intra- and inter-node bubbles. Compared with partial rollout, the strongest long-tail-mitigating baseline, DORA still achieves \textbf{1.18$\times$} end-to-end speedup, owing to its zero-re-prefill migration (Section~\ref{sec:KV Cache Reuse}) that avoids the re-prefill cost partial rollout incurs at each weight update. The same pattern holds on 128 GPUs—Sync's rollout-only fraction grows to 73\% while DORA's remains at only 24\%—yielding \textbf{5.9$\times$} rollout and \textbf{1.93$\times$} end-to-end speedup\footnote{Theoretically, the rollout-only phase can be fully eliminated via aggressive staleness strategies \textcolor{black}{as detailed in Appendix~\ref{app:bubble}}.}.

\textbf{End-to-End Throughput.}
The step-time gains translate into proportional throughput improvements across both cluster scales. As shown in Figure~\ref{fig:throughput}, on 64 GPUs, DORA reaches \textbf{23{,}327\,tokens/s}, a \textbf{1.65$\times$} improvement over synchronous training and \textbf{1.17$\times$} over partial rollout. The same pattern holds on 128 GPUs, where DORA achieves \textbf{34{,}135\,tokens/s}, yielding \textbf{2.12$\times$} and \textbf{1.11$\times$} speedups over the two baselines, respectively. Notably, this throughput reflects the trajectories actually consumed by training. Both baselines waste accelerator time—\textcolor{black}{one-step off-policy} through long-tail idleness, partial rollout through re-prefill at each weight update—whereas DORA's multi-version streaming and zero-re-prefill migration convert all rollout duration into effective throughput.

\textbf{Model Convergence.}
To verify that DORA's efficiency gains do not compromise algorithmic fidelity, we monitor the mean training reward over 100 steps on 72 GPUs (Figure~\ref{fig:convergence}). Both DORA variants ($k{=}1$ and $k{=}3$) closely track the synchronous baseline, confirming that multi-version streaming training preserves convergence behavior in the bounded staleness settings. Nevertheless, we observe that the $k{=}3$ \textcolor{black}{variant} exhibits a moderately slower convergence rate compared to $k{=}1$, which further underscores the necessity of staleness control.

\subsection{Ablation Study}
We conduct an ablation study to further investigate the contribution of each component to DORA's overall acceleration. As shown in Figure~\ref{fig:ablation}, the baseline without KV cache reuse requires an average of 183 seconds per rollout, whereas the variant with KV cache reuse completes the same rollout in approximately 166 seconds under identical dense 32B model settings. This KV cache optimization yields a 9\% speedup over the non-cached baseline. We do not ablate the dynamic resource orchestration module, \textcolor{black}{as it is indispensable for the system to function.}

\subsection{Overhead Analysis}
We quantify the three primary overheads introduced by DORA's dynamic orchestration and KV-Cache reuse: P2P-based load balancing, request transfer, and free-cache operations. As shown in Figure~\ref{fig:overhead}, all three remain well below the throughput gains they enable. Load balancing—covering request monitoring, resource re-partitioning, and P2P weight synchronization—accounts for 0.4\% / 1.5\% of total execution time at 64 / 128 GPUs. Request transfer, which carries metadata and physical KV-Cache states, is bounded under 4\% and \emph{decreases} at scale (3.6\% $\to$ 2.1\%) as larger throughput amortizes the migration cost. Free-cache operations are negligible (under 0.03\% in both settings), confirming that our hierarchical memory management runs entirely off the critical path. Overall, the aggregate system overhead does not grow with cluster size.

\subsection{Production Deployment}
We further deploy DORA on a ${\sim}$500B-parameter MoE model with up to 64K-token responses, comparing against the well-tuned synchronous baseline used in production (\textcolor{black}{since} running all baselines at this scale is prohibitive) with \textcolor{black}{4,096} accelerators. As shown in Figure~\ref{fig:moe3b}, DORA achieves \textbf{3.6$\times$} rollout speedup on mathematical and tool-integrated reasoning, and up to \textbf{6.2$\times$} on agentic training over Tau2-bench~\citep{tau2-bench} and Vita~\citep{vita-bench}. The widening gap on agentic workloads—where responses are longest and most skewed—aligns with DORA's design hypothesis: the more pronounced the long tail, the larger the bubble that multi-version streaming eliminates. 
DORA has served as the default asynchronous paradigm in our in-house RL framework since 2025, delivering \textbf{2~--~4$\times$} end-to-end speedup with no quality degradation at scale, powering our competitive open-source LLMs.

\begin{figure*}[t]
    \begin{minipage}[t]{0.32\linewidth}
        \centering 
        \includegraphics[width=1\textwidth]{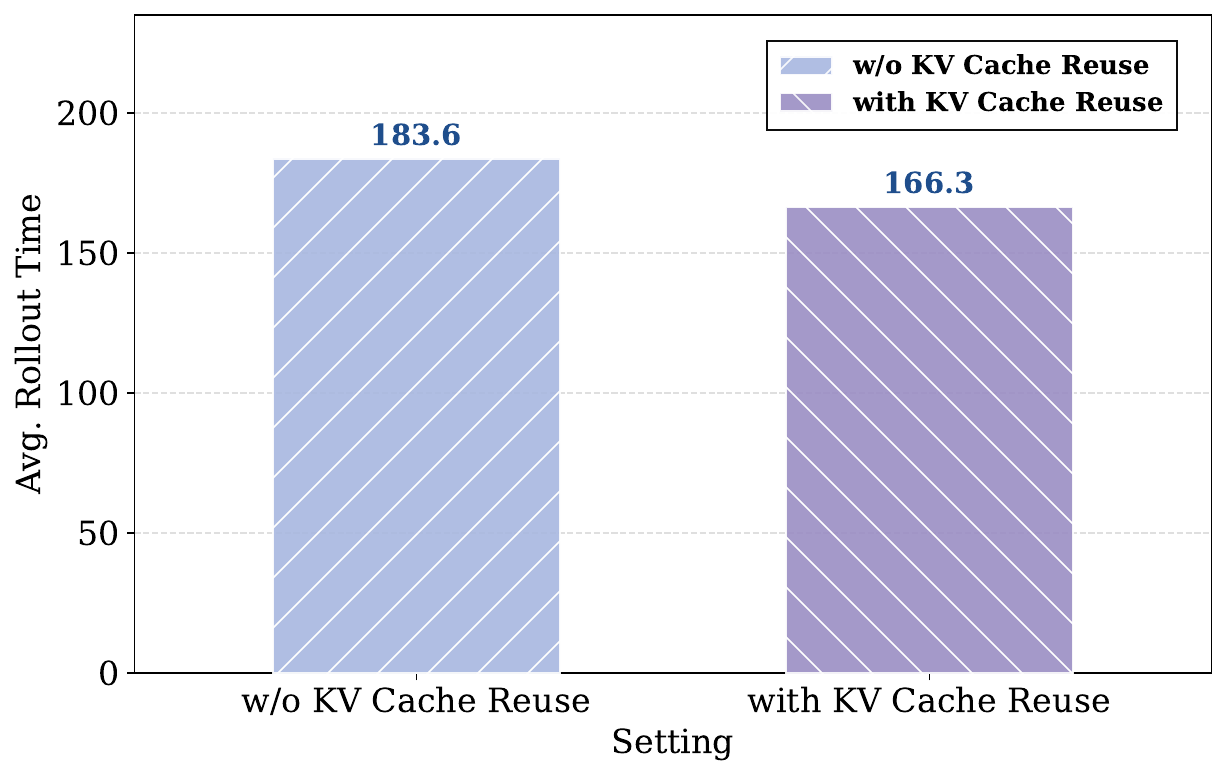}
        \caption{Ablation study on the effect of KV Cache reuse on average rollout time (in seconds).}
        \label{fig:ablation}
    \end{minipage}
    \hfill 
    \begin{minipage}[t]{0.32\linewidth}
        \centering
        \includegraphics[width=1\textwidth]{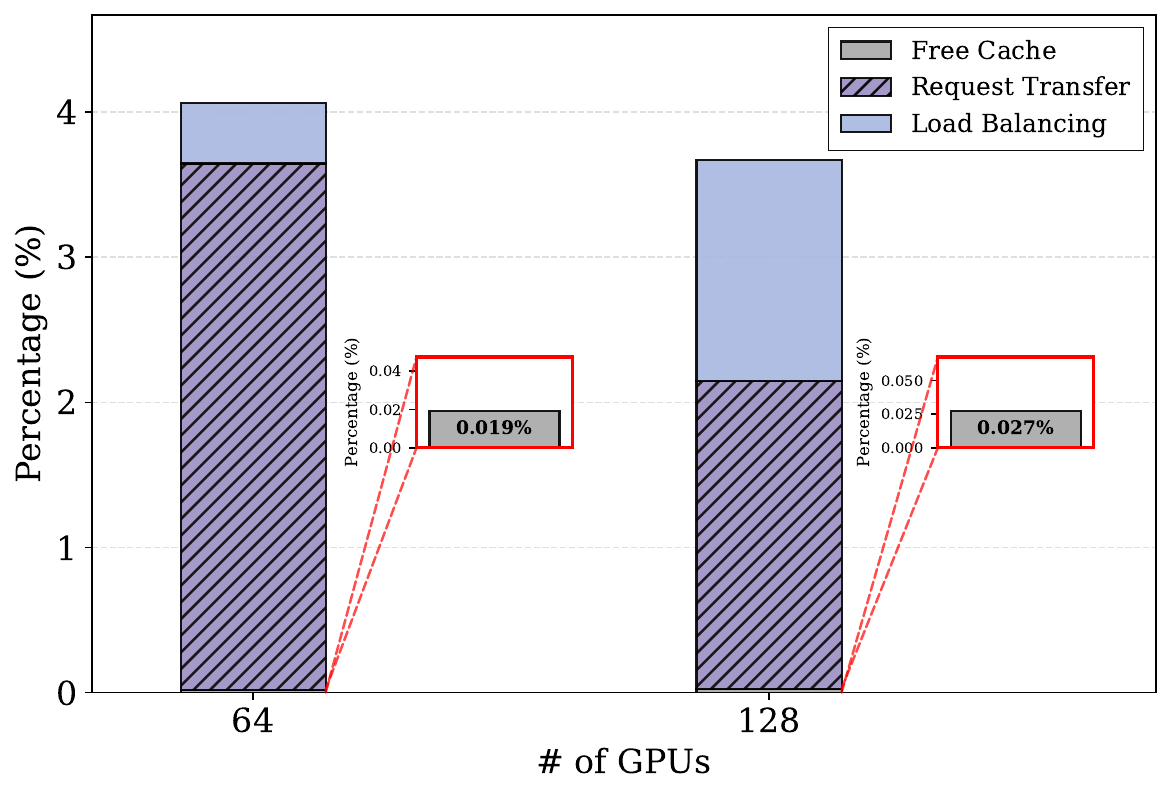} 
        \caption{System overhead breakdown for DORA on 64 and 128 GPUs.}
        \label{fig:overhead}
    \end{minipage}
    \hfill 
    \begin{minipage}[t]{0.32\linewidth}
        \centering
        \includegraphics[width=1\textwidth]{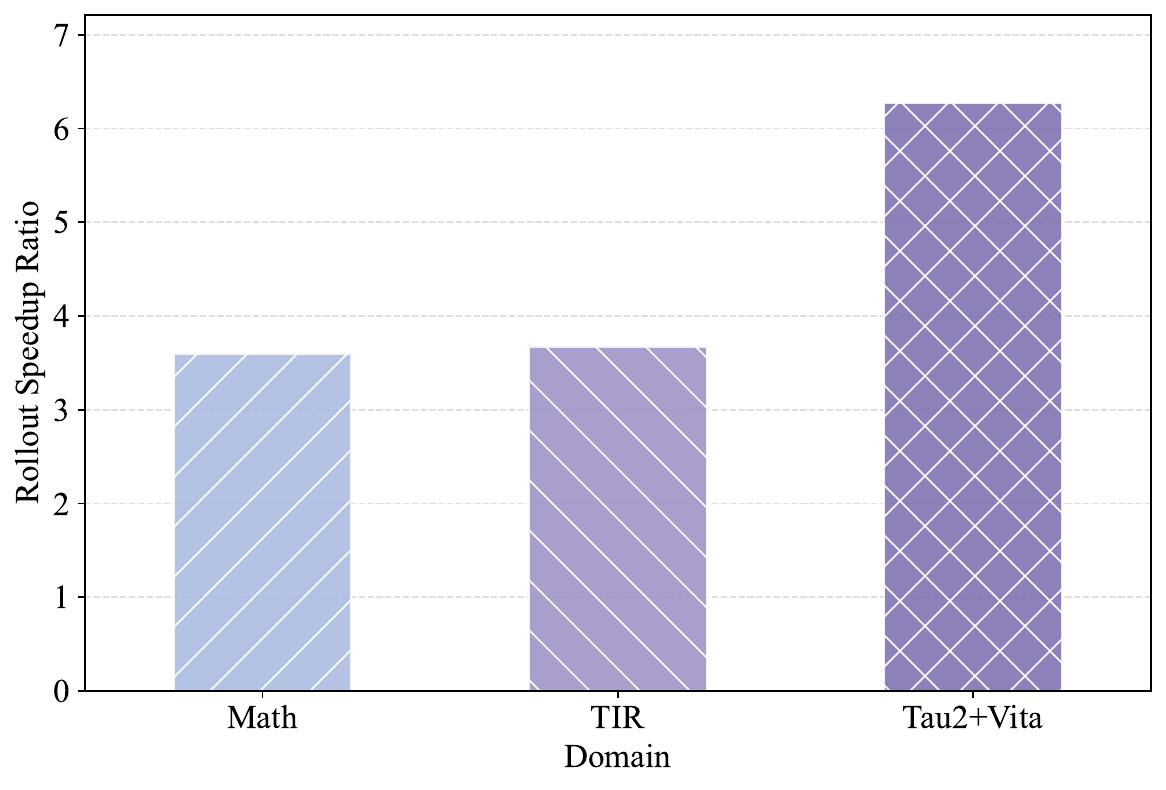}
        \caption{DORA vs. sync in production on a ${\sim}$500B MoE model with 64K max resp. len.}
        \label{fig:moe3b}
    \end{minipage}
\end{figure*}

\section{Conclusion and Limitations}
We present DORA, a scalable asynchronous RL system that resolves the long-tail dilemma in large-scale LLM post-training. By embodying a \textit{multi-version rollout} paradigm, where multiple policy versions coexist and each trajectory is generated end-to-end under a single version, DORA eliminates the long-tail bubble without introducing algorithmic corrections. Furthermore, this single-policy generation yields KV-Cache equivalence, enabling zero-re-prefill migration during request relocation. Experiments demonstrate up to $8.2\times$ rollout speedup and $2.12\times$ end-to-end acceleration over synchronous training, validated by large-scale industrial deployments. Despite these significant efficiency gains, DORA has limitations that merit further investigation. Algorithmically, the staleness bound $K$ requires manual configuration, relying on PPO's clipping to mitigate off-policy bias; incorporating adaptive staleness control or explicit delay compensation could further optimize the convergence--throughput tradeoff. Experimentally, our evaluations were conducted within a controlled in-house framework, and MoE scaling was validated primarily on production data and cluster. Future efforts will address this by benchmarking directly against public systems such as veRL~\citep{sheng2025hybridflow} and AReaL~\citep{fu2025areal}, alongside extensive evaluations on more open-source MoE models.




{
\small
\bibliographystyle{plainnat}
\bibliography{main}
}

\newpage
\appendix
\section{Related Work}\label{app:related_work}

To accelerate reinforcement learning (RL) post-training for large language models, various distributed training systems and asynchronous strategies have been proposed. Previous RL training systems~\citep{yao2023deepspeed,xiao2023adaptive,hu2024openrlhf,sheng2025hybridflow} primarily focus on model placement and scheduling within a synchronous training paradigm. Synchronous training cycles sequentially through rollout, experience preparation, and model training within each step. While this provides clean algorithmic semantics by enforcing a strict batch barrier between rollout and training, it suffers from severe batch-barrier idle time—especially in long-context and complex reasoning scenarios where the generation length is highly skewed. To alleviate this rollout bottleneck, recent efforts have shifted towards asynchronous and semi-asynchronous paradigms, which can be broadly categorized into three main directions. 

\paragraph{Replication-based (Oversampling) Methods.}
Replication-based or oversampling methods~\citep{gao2025rollpacker,zhang2025sortedrl} attempt to shorten the rollout duration by over-provisioning generation prompts. Specifically, they dispatch a larger number of rollout requests than the required training batch size and simply discard the in-flight long trajectories once enough completed responses are collected. While this approach effectively reduces the long-tail latency, it fundamentally compromises data integrity. The discarded trajectories often contain the lengthy chain-of-thought reasoning steps that are crucial for developing emergent agentic capabilities. Furthermore, the resulting length-biased distribution distorts advantage estimation, particularly in group-relative algorithms like GRPO~\citep{shao2024deepseekmath}, whereas DORA preserves every sampled trajectory without algorithmic distortion.

\paragraph{One-step Off-policy Methods.}
$K$-step off-policy methods~\citep{luo2025deepcoder,zhong2025streamrl,noukhovitch2024asynchronous,he2025history,han2025asyncflow}—most commonly one-step off-policy—overlap the rollout and training stages temporally. They achieve this by allowing the current step's rollout to use behavior policy weights from the previous iteration while the training stage updates the target policy. Although this design successfully hides the pipeline bubble between the rollout and training stages, it does not address the fundamental duration of the rollout phase itself. Both intra-node bubbles (idle slots within a device) and inter-node bubbles (faster instances waiting for the slowest) persist under highly skewed workload distributions. In contrast, DORA utilizes a multi-version streaming mechanism that completely breaks the synchronous batch barrier, directly resolving these intra- and inter-node inefficiencies.

\paragraph{Partial-rollout Methods.}
Partial-rollout methods~\citep{team2025kimi,wu2025llamarl,du2025ulorl,fu2025areal,slime_github} mitigate the long-tail issue by segmenting lengthy responses into smaller chunks at each weight update. When a new policy version becomes available, ongoing trajectories are resumed and continued under the latest weights. This approach effectively eliminates device idleness but introduces two significant challenges. First, from an algorithmic perspective, a single trajectory is now stitched together from multiple distinct policy versions. This departs from the standard RL formulation and necessitates complex algorithmic corrections, such as masking earlier segments during loss computation~\citep{team2025kimi} or applying decoupled PPO objectives~\citep{fu2025areal} to maintain convergence. Second, from a system perspective, every weight update invalidates the KV-Cache for the ongoing trajectories, forcing a full re-prefill of the context. This re-prefill overhead grows dramatically with context length and is particularly prohibitive in MoE architectures. DORA bypasses both challenges by ensuring each trajectory is generated end-to-end under a single policy version, preserving standard algorithmic semantics and enabling zero-re-prefill migration.

Concurrently with our work, several systems~\citep{seed2025seed1,sheng2025laminar} have explored multi-version streaming training concepts. However, these systems rely on two-tier CPU relay architectures for weight management and employ significantly different mechanisms for workload orchestration and staleness control compared to DORA's centralized load-balancing orchestrator and zero-re-prefill migration.

\section{Experimental Setup}\label{app:setup}
\textbf{Testbed.} Our experiments are conducted on a cluster consisting of 16 nodes, each equipped with 8 NVIDIA H800 GPUs. Intra-node communication is facilitated by NVLink with a bandwidth of 400 GB/s, while inter-node connectivity is provided by 8$\times$400 Gbps network interfaces. Additionally, our production cluster employs non-CUDA accelerators, each providing approximately 60 GB of available device memory.

\textbf{Models and Metrics.} Our experiments use Qwen2.5-32B \citep{qwen2025qwen25technicalreport} for dense architectures and a around 500B open-source MoE model for MoE architectures. We measure \textit{end-to-end throughput} (tokens/s), calculated as the total tokens (prompts and responses) processed per second, and \textit{average step time} (min), which represents the wall-clock time per RL iteration. All reported numbers are averaged over five RL iterations after the warm-up phase to reflect steady-state performance.

\textbf{Datasets.} We utilize the "DAPO-Math-17k" dataset for training, with the maximum input and output sequence lengths set to 2K and 30K tokens, respectively.

\textbf{Baselines and Implementation.} We compare DORA against three representative RL training paradigms: (1)~\textit{Synchronous (All-Colocated)}, which satisfies all constraints but suffers from batch barriers; (2)~\textit{One-step off-policy}, which satisfies all constraints with staleness $K{=}1$ but only overlaps pipeline stages without eliminating rollout bubbles; and (3)~\textit{Partial rollout} in All-colocated implementation similar to the work~\citep{team2025kimi}. All baselines are implemented within the same in-house RL framework to ensure a controlled comparison under identical hardware and software configurations. Our RL system uses vLLM~\citep{kwon2023efficient} as the inference engine, Megatron-LM~\citep{megatron-lm} as the training backend, and extends torch RPC~\citep{damania2023pytorch} with streaming primitives. The software environment includes CUDA-12.4, PyTorch-2.6.0, vLLM-0.8.5, and NCCL-2.28.

\textbf{Training Configurations.} For the RL algorithm, we follow the setting of DAPO \citep{yu2025dapo}, a variant of GRPO. Each rollout consists of a prompt batch size of 512, with 16 responses sampled per prompt, resulting in a global training batch size of 8,192. Each training iteration involves 16 update steps with a micro-batch size of 512. To stress-test efficiency under realistic long-tailed generation, we select an intermediate checkpoint where the mean response length is 2.4K tokens and the maximum reaches 30K.

\section{Case Study: Empirical Validation of Multi-Version Streaming}
\label{app:case-study}
This part presents a detailed empirical case study that validates the algorithmic properties of DORA's multi-version streaming training. Using rollout logs spanning 22{,}820 records (365{,}120 responses) over training steps 20--100 with staleness bound $K{=}3$, we examine three modes: \emph{DORA}, \emph{Partial Rollout}, and \emph{Synchronous} training. Each record contains 16 responses scored by a binary reward model ($+1$/$-1$).

\subsection{Observation 1: Bounded Staleness Incurs Nearly Zero Quality Degradation}
\label{app:staleness-zero}

A natural concern with asynchronous training is that stale data degrades learning. We perform a controlled analysis to disentangle staleness from confounding variables.

\textbf{Na\"ive observation.} Aggregating across all records, staleness-0 data achieves a 58.9\% pass rate versus 55.6\% for staleness-1, an apparent 3.3 percentage point gap.

\textbf{Controlled analysis.} This gap is entirely explained by \emph{selection bias}: harder problems produce longer responses, which take more time to generate and thus receive higher staleness labels. Table~\ref{tab:staleness-controlled} stratifies records by difficulty (pass rate bin) and compares staleness-0 versus staleness-1 \emph{within each stratum}:

\begin{table}[h]
\centering
\caption{Pass rate (\%) by staleness, stratified by problem difficulty (DORA, $K{=}3$). The staleness gap vanishes after controlling for difficulty, confirming zero quality degradation.}
\label{tab:staleness-controlled}
\small
\begin{tabular}{lccccc}
\toprule
\textbf{Difficulty Bin} & \textbf{s=0 (\%)} & \textbf{n (s=0)} & \textbf{s=1 (\%)} & \textbf{n (s=1)} & \textbf{$\Delta$ (\%)} \\
\midrule
Hard (PR $<$ 0.25) & 10.93 & 187 & 11.27 & 667 & $+$0.34 \\
Medium (0.25--0.5) & 34.42 & 146 & 34.39 & 580 & $-$0.03 \\
Medium-Easy (0.5--0.75) & 60.67 & 184 & 60.05 & 625 & $-$0.62 \\
Easy (PR $>$ 0.75) & 86.55 & 441 & 86.55 & 1263 & $+$0.00 \\
\bottomrule
\end{tabular}
\end{table}

We further verify this at the per-response level using Partial Rollout's generation segments (Table~\ref{tab:staleness-per-resp}), which provides 141{,}632 responses with explicit version labels:

\begin{table}[h]
\centering
\caption{Per-response staleness analysis (Partial Rollout, $n{=}141{,}632$). Within each difficulty bin, staleness-0 and staleness-1 responses perform identically.}
\label{tab:staleness-per-resp}
\small
\begin{tabular}{lcccc}
\toprule
\textbf{Pass Rate Bin} & \textbf{s=0 (\%)} & \textbf{s=1 (\%)} & \textbf{$\Delta$ (\%)} & \textbf{n (total)} \\
\midrule
$[0, 0.125)$ & 6.25 & 6.25 & 0.00 & 14{,}384 \\
$[0.125, 0.25)$ & 15.43 & 15.28 & $-$0.15 & 17{,}776 \\
$[0.25, 0.5)$ & 34.04 & 33.86 & $-$0.17 & 25{,}712 \\
$[0.5, 0.75)$ & 60.18 & 59.85 & $-$0.33 & 28{,}880 \\
$[0.75, 0.875)$ & 78.37 & 78.63 & $+$0.26 & 19{,}344 \\
$[0.875, 1.0]$ & 91.37 & 91.28 & $-$0.10 & 35{,}520 \\
\bottomrule
\end{tabular}
\end{table}

\textbf{Mechanism.} Figure~\ref{fig:case_staleness}(b) confirms the causal pathway: staleness-1 responses have a median token count of 1{,}495 versus 1{,}015 for staleness-0 (47\% longer). Longer generation implies harder problems; under DORA's asynchronous scheduling, these naturally complete after a weight update and receive a higher staleness label. The \emph{on-policy} improvement rate is only 0.113\%/step (measured from on-policy data over steps 50--100), so the theoretical maximum penalty for $K{=}3$ is merely $3 \times 0.113 = 0.34\%$, well below sampling noise. This validates that DORA's sliding-window staleness control (Section~\ref{sec:Multi-version Asynchronous Training}) is sufficient for convergence parity.

\begin{figure}[t]
  \centering
  \includegraphics[width=0.95\textwidth]{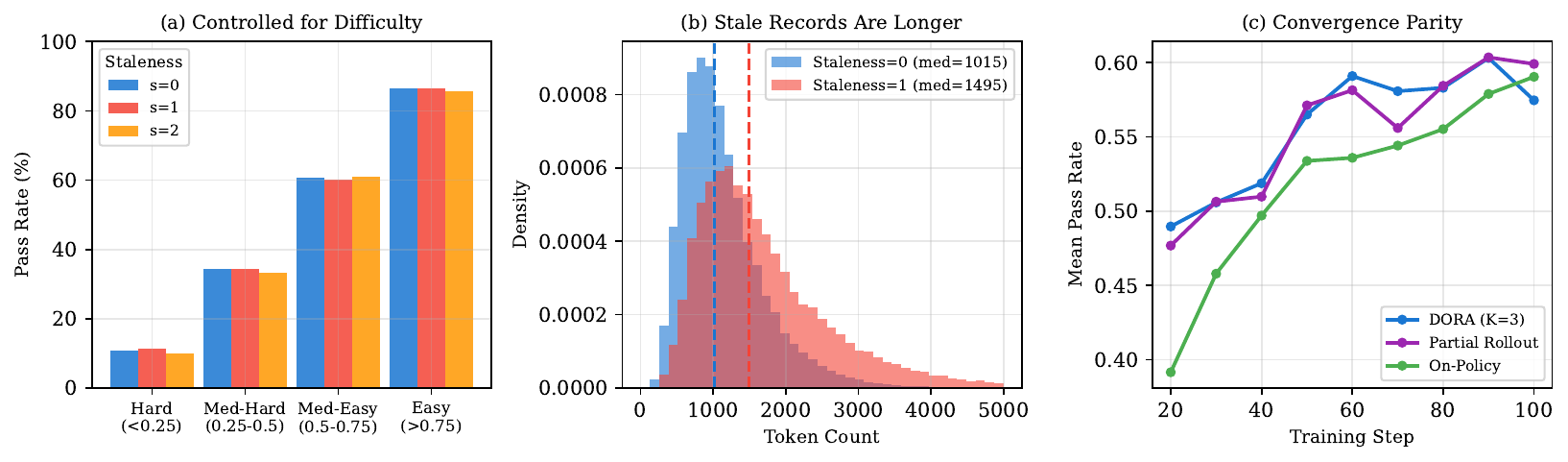}
  \caption{(a)~After controlling for problem difficulty, the staleness gap vanishes ($\Delta < 0.6\%$ across all bins). (b)~Selection bias mechanism: stale records are systematically longer (harder). (c)~All three paradigms converge to the same pass-rate range (0.57--0.60).}
  \label{fig:case_staleness}
\end{figure}

\subsection{Observation 2: Long-Tail Trajectories Carry the Highest Training Signal}
\label{app:longtail-value}

DORA's trajectory-level streaming preserves every sampled trajectory (Section~\ref{sec:Multi-version Asynchronous Training}), in contrast to replication-based methods that discard long-tail samples once enough short ones complete. We validate the importance of this design by measuring the GRPO gradient signal, quantified as within-group reward variance, across difficulty levels:

\begin{table}[h]
\centering
\caption{Reward variance (gradient signal strength for GRPO) by difficulty. Medium-difficulty problems with the highest variance are also the longest and most likely to be stale, i.e., exactly the trajectories that replication-based methods discard first.}
\label{tab:gradient-signal}
\small
\begin{tabular}{lcccc}
\toprule
\textbf{Difficulty} & \textbf{n} & \textbf{Reward Var.} & \textbf{Avg Length} & \textbf{Avg Staleness} \\
\midrule
Hard (1--3/16 correct) & 1{,}007 & 0.409 & 3{,}755 & 0.99 \\
Medium-Hard (4--7/16) & 868 & \textbf{0.939} & 3{,}624 & 1.01 \\
Medium-Easy (8--11/16) & 950 & \textbf{1.001} & 3{,}418 & 0.96 \\
Easy (12--14/16) & 1{,}206 & 0.617 & 3{,}025 & 0.90 \\
Very Easy (15--16/16) & 721 & 0.250 & 2{,}711 & 0.87 \\
\bottomrule
\end{tabular}
\end{table}

The medium-difficulty records (pass rate 0.25--0.75) exhibit the highest reward variance ($\sim$1.0), providing the strongest gradient signal. These records are simultaneously the longest (3{,}400--3{,}800 chars) and the most stale (staleness 0.96--1.01). In a replication-based system that terminates generation once $TBS$ short trajectories are collected, these high-signal samples would be the first to be discarded. DORA preserves all of them via multi-version streaming, ensuring the integrity of advantage estimation in GRPO.

\begin{figure}[t]
  \centering
  \includegraphics[width=0.85\textwidth]{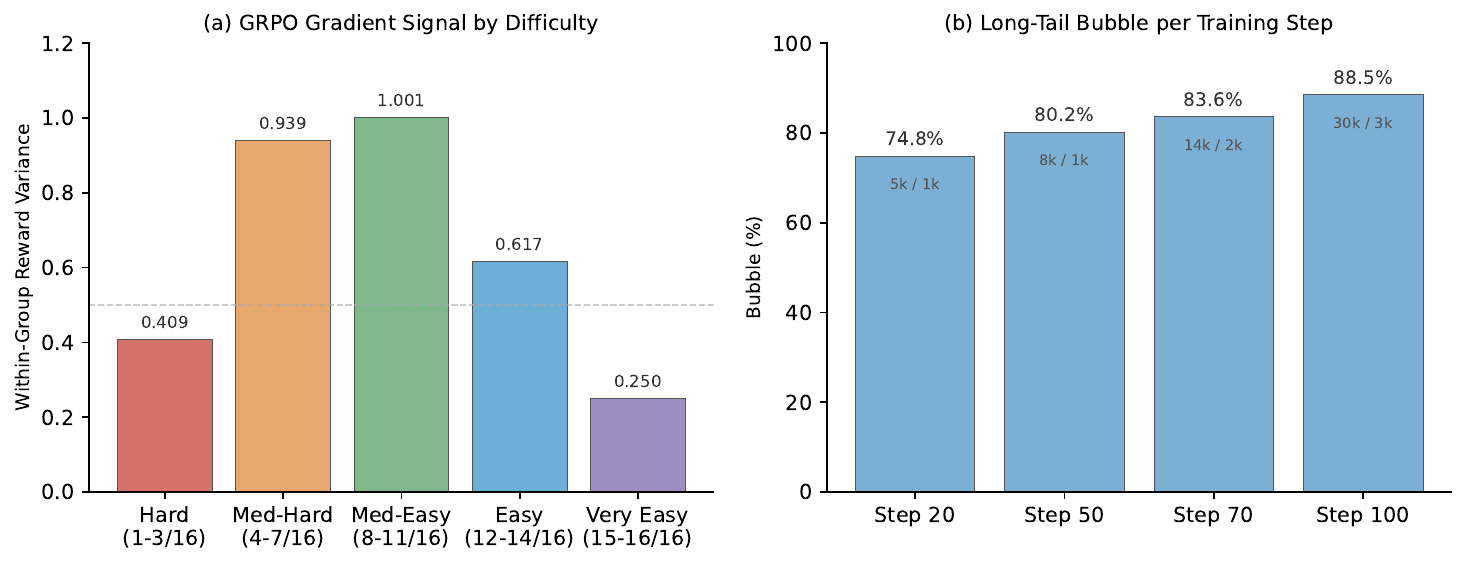}
  \caption{(a)~GRPO gradient signal peaks at medium difficulty (pass rate 0.25--0.75). (b)~Under synchronous training, the inter-node bubble exceeds 75\% across all steps and grows to 88.5\% at step 100 as responses lengthen.}
  \label{fig:case_longtail}
\end{figure}

\subsection{Observation 3: Quantifying the Long-Tail Bubble}
\label{app:bubble}

We directly measure the inter-node bubble that DORA could theoretically eliminate from the view of the request workloads. For each training step, we compute the fraction of decode time wasted waiting for the longest trajectory:
$$\text{Bubble} = \frac{\max_i L_i - \overline{L_i}}{\max_i L_i}$$
As shown in Figure~\ref{fig:case_longtail}(b), the bubble exceeds 74\% at step 20 and grows to \textbf{88.5\%} at step 100 (max response: 30{,}720 tokens \textit{vs.}\ mean: 3{,}536). The max-to-median token ratio reaches \textbf{28$\times$}, consistent with the order-of-magnitude skew reported in Section~\ref{sec:longtail}. This quantifies the efficiency opportunity that DORA captures through multi-version streaming: by allowing the longest trajectories to continue under their legacy version while new requests run on the latest policy model, DORA could theoretically eliminate this 75--89\% bubble entirely from the viewpoint of engineering only.

\subsection{Summary}

This case study empirically characterizes DORA's multi-version streaming training along three axes. Two concern its algorithmic design principles, and one quantifies the efficiency opportunity that motivates the design.
\begin{itemize}[leftmargin=1.2em, itemsep=2pt]
    \item \textbf{Bounded staleness (design principle).} After controlling for problem difficulty, staleness $\leq 3$ produces nearly zero measurable quality degradation ($\Delta < 0.6\%$ across all strata); the apparent staleness gap is entirely explained by selection bias.
    \item \textbf{Preserving every sampled trajectory (design principle).} The longest, most stale trajectories carry the highest GRPO gradient signal (reward variance $\sim$1.0). Discarding them, as replication-based methods do, removes the most valuable training data.
    \item \textbf{Quantifying the long-tail bubble (motivation).} Under synchronous training the inter-node bubble reaches 75--89\%, confirming that the inefficiency DORA targets is substantial.
\end{itemize}
Together, these observations support DORA's design goal of resolving the long-tail dilemma without introducing algorithmic corrections.

\section{A Policy-Improvement View of Single-Version Rollout in DORA vs.\ Partial Rollout}
\label{sec:policy-improvement}
This appendix examines the algorithmic dimension of DORA's single-policy-per-trajectory
design through its effect on the \emph{policy-improvement guarantee} underlying PPO-style
updates; a complementary variance-based analysis follows in
Appendix~\ref{sec:baseline-view}. In asynchronous training a batch mixes trajectories from
multiple policy versions, and DORA and partial-rollout
methods~\citep{team2025kimi, wu2025llamarl, du2025ulorl, fu2025areal} differ in how such a
batch enters this guarantee. We examine this through the classical trust-region /
sample-reuse framework for monotonic policy
improvement~\citep{schulman2015trust, achiam2017constrained, queeney2021generalized}. Our
goal is modest: to locate DORA and partial rollout as two regimes of a common
mixed-behavior formulation, and to identify the structural consequence---absence versus
presence of an advantage-substitution bias---that distinguishes them.

\paragraph{Scope and abstraction.}
The analysis below is a \emph{stylized comparison under a common formulation}, not a
faithful model of either system's full training dynamics. We adopt the
\emph{mixed-behavior-policy formulation}, in which the policy version index is lifted into
the state~\citep{espeholt2018impala}: a batch drawn from lagged versions $\{\pi_{w_k}\}$ is
described by an augmented state $(s, i) \in \mathcal{S} \times \mathcal{I}$, a mixed behavior
policy $\beta(a \mid s, i) := \pi_{w_i}(a \mid s)$, and an \emph{index-transition kernel}
$q(i' \mid i)$ that governs how the active version evolves along a trajectory. Within this
formulation, the two asynchronous paradigms are the two representative values of $q$:
\begin{itemize}
\item \textbf{Trajectory-level mixing (DORA).} Each trajectory is generated end-to-end under
the version active at its dispatch, so the index never changes:
$q(i' \mid i) = \mathbf{1}\{i' = i\}$.
\item \textbf{Step/segment-level mixing (partial rollout).} A trajectory may switch to a
newer version at a weight update, with switching probability $\sigma > 0$:
$q(i' \mid i) = (1-\sigma)\,\mathbf{1}\{i' = i\} + \sigma\,\kappa(i' \mid i)$, where $\kappa$
is the target-version distribution.
\end{itemize}
This formulation is faithful to DORA's actual generation process, which tags each prompt
with a single version upon dispatch (Section~\ref{sec:methods}). It uses $\sigma > 0$ as a
stylized representative of the step/segment-level family; it does not model the additional
correction mechanisms that concrete partial-rollout systems apply to their stitched
trajectories---decoupled objectives, importance corrections across segments, gradient
masking, and so on---which lie outside our scope. Throughout, $\pi_\theta$ denotes the
target policy, $\{\pi_{w_k}\}_{k=1}^{K}$ the lagged policies with schedule $\{p_k\}$, and
$\mu_k = J(\pi_{w_k})$ their expected rewards.

\subsection{Setup and Notation}
\label{subsec:pi-setup}

The phenomenon studied here---mid-trajectory version switching---is intrinsically
token-level: a weight update can change the active policy version between two tokens of the
same response. We therefore adopt a token-level MDP in which a response is a length-$H$
trajectory $(a_1, \dots, a_H)$ and the active version may evolve along it. Let
$J(\pi) = \mathbb{E}[\sum_{t\ge0} \gamma^t r(s_t, a_t) \mid \pi]$, let $d_\pi$ denote the
discounted state-visitation distribution, and let $A^{\pi}$ denote the advantage function.
For a behavior policy $\pi_{w}$, define the importance-weighted surrogate objective
\begin{equation}
L_{\pi_{w}}(\pi_\theta)
:= \frac{1}{1-\gamma}\,
\mathbb{E}_{s \sim d_{\pi_{w}},\, a \sim \pi_{w}(\cdot\mid s)}
\!\left[\frac{\pi_\theta(a \mid s)}{\pi_{w}(a \mid s)}\, A^{\pi_{w}}(s,a)\right],
\label{eq:surrogate}
\end{equation}
and the expected-advantage coefficient
$C_{\pi_\theta, \pi_{w}} := \max_{s} \bigl| \mathbb{E}_{a \sim \pi_\theta}[A^{\pi_{w}}(s,a)] \bigr|$.
On the augmented state space, the mixed behavior policy $\beta$ induces a visitation
distribution $d_\beta(s,i)$ and an advantage $A^{\beta}((s,i),a)$; the index-transition
kernel $q$ enters only through the future evolution of $i$. We write
$J_{\mathrm{mix}} := J(\beta)$ for the return of the mixed behavior policy, and
$C_{\pi_\theta,\beta} := \max_{(s,i)} | \mathbb{E}_{a\sim\pi_\theta}[A^{\beta}((s,i),a)] |$.

\subsection{Monotonic Improvement under Trajectory-Level Mixing}
\label{subsec:pi-theorem}

Under trajectory-level mixing the augmented-state advantage collapses to the per-version
advantage, which lets the standard sample-reuse bound apply verbatim.

\begin{lemma}[Advantage reduction under trajectory-level mixing]
\label{lem:adv-reduction}
If $q(i' \mid i) = \mathbf{1}\{i' = i\}$, then $d_\beta(s,i) = \alpha_i\, d_{\pi_{w_i}}(s)$
and $A^{\beta}((s,i),a) = A^{\pi_{w_i}}(s,a)$ for every $(s,i)$, where $\alpha_i$ is the batch
fraction of version $i$. Consequently $J_{\mathrm{mix}} = \sum_i \alpha_i J(\pi_{w_i})$.
\end{lemma}
Since the index never changes, every future rollout from $(s,i)$ is generated by the same
$\pi_{w_i}$, so the augmented-state value and advantage reduce to those of $\pi_{w_i}$; a
proof is given in Appendix~\ref{subsec:proof-lemma}.

\begin{theorem}[Monotonic improvement, trajectory-level mixing]
\label{thm:mono-improve}
Let the batch be a trajectory-level mixture ($q = \mathbf{1}\{i'=i\}$) with version fractions
$\{\alpha_i\}$. Then, under the standard trust-region regularity conditions
of~\citet{schulman2015trust, achiam2017constrained},
\begin{equation}
J(\pi_\theta) - \sum_i \alpha_i\, J(\pi_{w_i})
\;\ge\;
\sum_i \alpha_i\, L_{\pi_{w_i}}(\pi_\theta)
\;-\;
\frac{2\gamma \max_i C_{\pi_\theta, \pi_{w_i}}}{(1-\gamma)^2}
\sum_i \alpha_i\,
\mathbb{E}_{s \sim d_{\pi_{w_i}}}\!\bigl[D_{\mathrm{TV}}(\pi_\theta, \pi_{w_i}; s)\bigr].
\label{eq:mono-improve}
\end{equation}
\end{theorem}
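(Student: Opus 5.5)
The plan is to reduce the statement to the classical single-behavior-policy bound, applied once per version, and then average the resulting inequalities with the weights $\alpha_i$. The basic tool is the standard sample-reuse / trust-region improvement inequality of \citet{achiam2017constrained} (see also \citet{schulman2015trust,queeney2021generalized}). For any pair of policies $(\pi_\theta,\pi_w)$ it reads
\begin{equation*}
J(\pi_\theta)-J(\pi_w)\;\ge\;L_{\pi_w}(\pi_\theta)-\frac{2\gamma\, C_{\pi_\theta,\pi_w}}{(1-\gamma)^2}\,\mathbb{E}_{s\sim d_{\pi_w}}\bigl[D_{\mathrm{TV}}(\pi_\theta,\pi_w;s)\bigr].
\end{equation*}
This is exactly the bound obtained from the performance-difference identity, the surrogate~\eqref{eq:surrogate}, and the standard bound on the total-variation distance between the discounted visitation distributions $d_{\pi_\theta}$ and $d_{\pi_w}$. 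It holds under the regularity conditions quoted in the theorem, and we will invoke it as given rather than reprove it.

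First, use Lemma~\ref{lem:adv-reduction} to justify the per-version decomposition. Under $q(i'\mid i)=\mathbf{1}\{i'=i\}$, the mixed-behavior objects factor by version: $d_\beta(s,i)=\alpha_i d_{\pi_{w_i}}(s)$, $A^\beta((s,i),a)=A^{\pi_{w_i}}(s,a)$, and $J_{\mathrm{mix}}=\sum_i\alpha_i J(\pi_{w_i})$. Consequently, the mixed surrogate over the augmented state space, $\frac{1}{1-\gamma}\mathbb{E}_{(s,i)\sim d_\beta,\,a\sim\beta}[\frac{\pi_\theta}{\beta}A^\beta]$, equals $\sum_i\alpha_i L_{\pi_{w_i}}(\pi_\theta)$. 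Likewise, the expected TV penalty under $d_\beta$ equals $\sum_i\alpha_i\mathbb{E}_{d_{\pi_{w_i}}}[D_{\mathrm{TV}}]$. The point of this step is that no cross-version terms survive: the index is frozen along a trajectory, so no advantage substitution occurs. This is precisely what fails when $\sigma>0$.

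Second, apply the single-policy inequality to each pair $(\pi_\theta,\pi_{w_i})$. In the $i$-th inequality, replace $C_{\pi_\theta,\pi_{w_i}}$ by $\max_j C_{\pi_\theta,\pi_{w_j}}$. This only weakens the bound, because the penalty term is nonnegative: both $C$ and $D_{\mathrm{TV}}$ are nonnegative. Then multiply the $i$-th inequality by $\alpha_i\ge0$ and sum, using $\sum_i\alpha_i=1$. The left side becomes $J(\pi_\theta)-\sum_i\alpha_iJ(\pi_{w_i})$. The first term on the right becomes $\sum_i\alpha_iL_{\pi_{w_i}}(\pi_\theta)$. The penalty terms combine into the common constant $\frac{2\gamma\max_iC_{\pi_\theta,\pi_{w_i}}}{(1-\gamma)^2}$ multiplying the $\alpha$-weighted expected TV. This is exactly \eqref{eq:mono-improve}. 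Equivalently, one can apply the inequality directly to $(\pi_\theta,\beta)$ on the augmented MDP and translate each term using the first step, noting that $C_{\pi_\theta,\beta}=\max_iC_{\pi_\theta,\pi_{w_i}}$ by the advantage reduction.

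The main obstacle is making this translation rigorous, specifically the interpretation of the $\alpha_i$. The single-policy bound is stated for one MDP with one start distribution. The $\alpha_i$ must therefore be read as the probability that a dispatched trajectory carries index $i$, i.e.\ the initial index distribution of the augmented MDP. Under a frozen index, the augmented MDP then decomposes as a mixture of $K$ independent copies of the original MDP. Two checks are needed: that the target policy $\pi_\theta$, which ignores $i$, induces return $J(\pi_\theta)$ in every copy, and that the mixture weights pass through linearly. Both follow from the block-diagonal transition structure. I would verify this first, since it is what legitimizes the per-version averaging.
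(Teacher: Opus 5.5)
Your argument is correct, but your primary route is different from the paper's.

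\textbf{The paper's route.} The paper applies the trust-region bound \eqref{eq:tr-bound} once, on the augmented MDP with behavior policy $\beta$. This yields \eqref{eq:aug-bound}, which holds for any kernel $q$. Only then does it specialize to $q=\mathbf{1}\{i'=i\}$. Lemma~\ref{lem:adv-reduction} factorizes $d_\beta$ and collapses $A^\beta$ to $A^{\pi_{w_i}}$. The constant $C_{\pi_\theta,\beta}$ becomes $\max_i C_{\pi_\theta,\pi_{w_i}}$ because the maximum runs over the product space. This is the route you list as ``equivalently''.

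\textbf{Your route.} You apply the base-MDP bound separately to each pair $(\pi_\theta,\pi_{w_i})$, relax $C_{\pi_\theta,\pi_{w_i}}$ to the maximum, and take the $\alpha$-weighted average. This is more elementary, and it reveals something. Every quantity in \eqref{eq:mono-improve} is a base-MDP quantity. So the inequality holds for arbitrary convex weights $\{\alpha_i\}$, without Lemma~\ref{lem:adv-reduction} or any hypothesis on $q$. Before the relaxation you even have the sharper penalty
\[
\frac{2\gamma}{(1-\gamma)^2}\sum_i \alpha_i C_{\pi_\theta,\pi_{w_i}}\,\mathbb{E}_{s\sim d_{\pi_{w_i}}}\bigl[D_{\mathrm{TV}}(\pi_\theta,\pi_{w_i};s)\bigr].
\]

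\textbf{Consequence for your ``main obstacle''.} Decomposing the augmented MDP into frozen-index copies, and checking that the lifted $\pi_\theta$ has return $J(\pi_\theta)$, are not needed for your proof. Your ``first step'' is not needed either. They matter only for interpretation: reading the left side as $J(\pi_\theta)-J_{\mathrm{mix}}$, and the right side as the surrogate a DORA batch actually estimates. That interpretation is where the trajectory-level hypothesis does its work. They are also needed if you take the augmented route.

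\textbf{What the paper's route buys.} It gives the $q$-independent bound \eqref{eq:aug-bound}. The paper reuses this bound directly in Corollary~\ref{cor:adv-bias} for $\sigma>0$, where no per-version decomposition of $d_\beta$ or $A^\beta$ is available.
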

Equation~\eqref{eq:mono-improve} adapts the multi-version sample-reuse improvement bound
of~\citet{queeney2021generalized} to the mixed-behavior setting. Unlike that bound, which
anchors the advantage at the current policy and measures improvement relative to it, our
bound anchors at the per-version advantage $A^{\pi_{w_i}}$ and measures improvement relative
to the mixture baseline $\sum_i \alpha_i J(\pi_{w_i})$; the proof (via the
performance-difference lemma~\citep{kakade2002approximately} and the average-TV bound
of~\citet{achiam2017constrained}) is given in Appendix~\ref{subsec:proof-theorem}. The key
structural point, supplied by Lemma~\ref{lem:adv-reduction}, is that the surrogate uses each
trajectory's \emph{own} behavior version $\pi_{w_i}$ together with its \emph{own} advantage
$A^{\pi_{w_i}}$. Consequently the per-trajectory importance ratio $\pi_\theta / \pi_{w_i}$---
exactly the ratio $r_{i,t}$ in DORA's objective---is a well-defined
single-policy ratio, and the bound holds \emph{without any cross-version correction}.

\subsection{Advantage-Substitution Bias under Step/Segment-Level Mixing}
\label{subsec:pi-corollary}

The reduction of Lemma~\ref{lem:adv-reduction} is exactly what fails once trajectories
switch versions mid-generation. This is the additive observation of this appendix.

\begin{corollary}[Single-version generation removes advantage-substitution bias]
\label{cor:adv-bias}
Under step/segment-level mixing ($\sigma > 0$), the augmented-state advantage
$A^{\beta}((s,i),a)$ depends on the future index evolution induced by $q$ and does not in
general equal any single-version advantage $A^{\pi_{w_i}}$. Estimating the surrogate with
$A^{\pi_{w_i}}$ in place of $A^{\beta}$ therefore incurs an advantage-substitution bias
\begin{equation}
\varepsilon_{\mathrm{sub}}(i)
:= \Big|\, \mathbb{E}_{s\sim d_{\pi_{w_i}},\, a\sim\pi_{w_i}(\cdot\mid s)}
\big[A^{\beta}((s,i),a) - A^{\pi_{w_i}}(s,a)\big] \,\Big|,
\label{eq:adv-bias}
\end{equation}
which is nonzero for $\sigma > 0$ (whenever the switched-to version differs) and vanishes as
$\sigma \to 0$. This bias enters the improvement guarantee directly. Writing
$\hat{L}(\pi_\theta)$ for the surrogate actually optimized with the single-version advantage
$A^{\pi_{w_i}}$, and $L^{\beta}(\pi_\theta) := \frac{1}{1-\gamma}\mathbb{E}_{(s,i)\sim d_\beta,\,
a\sim\pi_{w_i}}[\frac{\pi_\theta(a\mid s)}{\pi_{w_i}(a\mid s)}A^{\beta}((s,i),a)]$ for the
surrogate required by the augmented-state bound, a bounded importance ratio
$\pi_\theta/\pi_{w_i} \le M_\rho$ gives
$|L^{\beta}(\pi_\theta) - \hat{L}(\pi_\theta)| \le \frac{M_\rho}{1-\gamma}\,
\mathbb{E}_i[\varepsilon_{\mathrm{sub}}(i)]$. Substituting into the augmented-state improvement
bound yields, for any $\sigma$,
\begin{equation}
\begin{aligned}
J(\pi_\theta) - J_{\mathrm{mix}}
\;\ge\;\;
& \hat{L}(\pi_\theta)
\;-\;
\frac{2\gamma\, C_{\pi_\theta,\beta}}{(1-\gamma)^2}\,
\mathbb{E}_{(s,i)\sim d_\beta}\!\big[D_{\mathrm{TV}}(\pi_\theta, \pi_{w_i}; s)\big] \\
& \;-\;
\underbrace{\frac{M_\rho}{1-\gamma}\, \mathbb{E}_i[\varepsilon_{\mathrm{sub}}(i)]}_{\text{substitution penalty}}.
\end{aligned}
\label{eq:mono-improve-bias}
\end{equation}
Trajectory-level mixing ($\sigma = 0$, DORA) is the unique regime in which
$\varepsilon_{\mathrm{sub}} \equiv 0$: the substitution penalty vanishes, the reduction of
Lemma~\ref{lem:adv-reduction} holds exactly, and the standard GRPO objective
directly realizes the bound's surrogate without bias.
\end{corollary}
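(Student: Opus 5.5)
The proof has four parts, taken in order: structural claims, the surrogate-gap bound, substitution into the augmented-state bound, and the uniqueness claim at $\sigma=0$.

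\textbf{Structural claims.} First I would show that $A^{\beta}((s,i),a)$ generally differs from $A^{\pi_{w_i}}(s,a)$ when $\sigma>0$. Write the augmented value recursively as
$V^{\beta}(s,i)=\mathbb{E}_{a\sim\pi_{w_i}}\big[r(s,a)+\gamma\,\mathbb{E}_{s'}\mathbb{E}_{i'\sim q(\cdot\mid i)}V^{\beta}(s',i')\big]$.
Set $q=(1-\sigma)\mathbf{1}\{i'=i\}+\sigma\kappa$ and subtract the single-version Bellman equation for $V^{\pi_{w_i}}$. The difference $\Delta_i(s)=V^{\beta}(s,i)-V^{\pi_{w_i}}(s)$ then satisfies a linear recursion with source term
$\gamma\sigma\,\mathbb{E}_{s'}\big[\sum_{i'}\kappa(i'\mid i)V^{\beta}(s',i')-V^{\beta}(s',i)\big]$.
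Two consequences follow.
\begin{itemize}
\item As $\sigma\to 0$, the source tends to $0$. Since the recursion is a $\gamma$-contraction, $\Delta_i\to0$ uniformly, and the same holds for the advantages. This gives $\varepsilon_{\mathrm{sub}}(i)\to0$.
\item For $\sigma>0$, the source is nonzero whenever $\kappa$ puts mass on a version with a different continuation value. This is the generic case, which is how I read ``in general''. I would exhibit a two-state, two-version toy MDP where the expectation in \eqref{eq:adv-bias} is strictly positive, which establishes nonvanishing.
\end{itemize}

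\textbf{Surrogate-gap bound.} This step is direct. $L^{\beta}$ and $\hat L$ share the same sampling distribution and the same ratio, and differ only in $A^{\beta}$ versus $A^{\pi_{w_i}}$. I would decompose $d_\beta(s,i)=\alpha_i\,d_\beta(s\mid i)$, bound the ratio by $M_\rho$, and pull out the expectation over $i$. This yields
$|L^\beta-\hat L|\le \frac{M_\rho}{1-\gamma}\mathbb{E}_i[\varepsilon_{\mathrm{sub}}(i)]$.
The main obstacle is that $\varepsilon_{\mathrm{sub}}$ is defined with $s\sim d_{\pi_{w_i}}$ and the absolute value outside the expectation, while the gap naturally involves $d_\beta(\cdot\mid i)$ and the ratio inside. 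I would handle this by declaring the bound under the stylized convention that the conditional visitation is identified with $d_{\pi_{w_i}}$, i.e.\ to leading order in $\sigma$. Alternatively, I would read $\varepsilon_{\mathrm{sub}}$ as its ratio-weighted, absolute-inside version, which only enlarges it. I would state this explicitly as part of the scope caveat.

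\textbf{Substitution.} Next I would apply the standard improvement bound of \citet{achiam2017constrained}, proved exactly as Theorem~\ref{thm:mono-improve}, to the single policy $\beta$ on the augmented MDP. This gives
$J(\pi_\theta)-J_{\mathrm{mix}}\ge L^\beta-\frac{2\gamma C_{\pi_\theta,\beta}}{(1-\gamma)^2}\mathbb{E}_{d_\beta}[D_{\mathrm{TV}}]$.
Here $\pi_\theta$ acts on augmented states by ignoring $i$, so the TV distance at $(s,i)$ is $D_{\mathrm{TV}}(\pi_\theta,\pi_{w_i};s)$. Replacing $L^\beta\ge\hat L-|L^\beta-\hat L|$ and using the surrogate-gap bound gives \eqref{eq:mono-improve-bias}.

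\textbf{The case $\sigma=0$.} Here Lemma~\ref{lem:adv-reduction} gives $A^\beta=A^{\pi_{w_i}}$ pointwise, so $\varepsilon_{\mathrm{sub}}\equiv0$ and $\hat L=L^\beta$. The uniqueness claim follows from the nonvanishing part of the structural claims, again understood generically, i.e.\ assuming switched-to versions differ.
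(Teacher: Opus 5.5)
Your plan closely follows the paper's proof: perturbing the augmented Bellman recursion, writing $L^{\beta}-\hat L$ on the common distribution $d_\beta$, substituting into the augmented-state bound, and invoking Lemma~\ref{lem:adv-reduction} at $\sigma=0$. But it inherits a genuine defect that your structural step cannot get around: the quantity in \eqref{eq:adv-bias} is identically zero for \emph{every} $\sigma$. The reason is that $\beta(\cdot\mid s,i)=\pi_{w_i}(\cdot\mid s)$ and $V^{\beta}(s,i)=\mathbb{E}_{a\sim\pi_{w_i}(\cdot\mid s)}[Q^{\beta}((s,i),a)]$; this is exactly the recursion you wrote down. Hence both advantages are centered under the behavior action distribution:
\[
\mathbb{E}_{a\sim\pi_{w_i}(\cdot\mid s)}\bigl[A^{\beta}((s,i),a)\bigr]\;=\;0\;=\;\mathbb{E}_{a\sim\pi_{w_i}(\cdot\mid s)}\bigl[A^{\pi_{w_i}}(s,a)\bigr]\qquad\text{for every } s.
\]
So the inner expectation vanishes before any average over states is taken. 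This has three consequences. The two-state toy MDP you intend to exhibit, with a strictly positive value of \eqref{eq:adv-bias}, does not exist. The clauses ``nonzero for $\sigma>0$'' and ``unique regime'' are false for the literal $\varepsilon_{\mathrm{sub}}$. And the gap bound would assert $L^{\beta}=\hat L$, which fails in general, since $L^{\beta}-\hat L=\frac{1}{1-\gamma}\mathbb{E}_{(s,i)\sim d_\beta}\mathbb{E}_{a\sim\pi_\theta(\cdot\mid s)}\bigl[A^{\beta}((s,i),a)-A^{\pi_{w_i}}(s,a)\bigr]$ and the centering is lost once actions are drawn from $\pi_\theta\neq\pi_{w_i}$. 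The paper's proof has the same hole. It passes from pointwise $A^{\beta}\neq A^{\pi_{w_i}}$ to $\varepsilon_{\mathrm{sub}}>0$ ``barring exact cancellation in expectation,'' but that cancellation is always exact. Its step ``bounding the ratio by $M_\rho$ and applying the definition'' is precisely the illegitimate move you flagged.

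You sensed the obstacle, but only your second remedy works, and it must replace the definition throughout rather than sit in a scope caveat. Identifying $d_\beta(\cdot\mid i)$ with $d_{\pi_{w_i}}$, even to leading order in $\sigma$, leaves the absolute value outside the expectation, so it still produces the false inequality $|L^{\beta}-\hat L|\le 0$. Instead define $\tilde\varepsilon_{\mathrm{sub}}(i):=\mathbb{E}_{s\sim d_\beta(\cdot\mid i),\,a\sim\pi_{w_i}(\cdot\mid s)}\bigl|A^{\beta}((s,i),a)-A^{\pi_{w_i}}(s,a)\bigr|$, and take $\mathbb{E}_i$ under the index marginal of $d_\beta$. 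For $\sigma>0$ that marginal is not the dispatch distribution $\alpha$, so your factorization $d_\beta(s,i)=\alpha_i\,d_\beta(s\mid i)$ needs this reinterpretation. With these changes the rest of your plan goes through:
\begin{itemize}
\item The gap bound follows directly from $\pi_\theta/\pi_{w_i}\le M_\rho$.
\item Your contraction argument bounds $V^{\beta}(\cdot,i)-V^{\pi_{w_i}}$, and hence $Q^{\beta}((\cdot,i),\cdot)-Q^{\pi_{w_i}}$, uniformly by $O(\sigma)$, giving $\tilde\varepsilon_{\mathrm{sub}}\to0$.
\item Strict positivity for $\sigma>0$ now only requires a toy instance in which $Q^{\beta}((s,i),a)-Q^{\pi_{w_i}}(s,a)$ varies with $a$ on the support of $\pi_{w_i}(\cdot\mid s)$. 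That makes the advantage difference nonzero with positive probability.
\end{itemize}
Uniqueness of the $\sigma=0$ regime then holds in the generic sense you already indicated.
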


\paragraph{$\hat{L}$ is a best-case proxy.}
We take $\hat{L}$ to use the clean single-version advantage $A^{\pi_{w_i}}$; this is a
\emph{best-case} proxy for partial rollout, which in practice cannot even access
$A^{\pi_{w_i}}$ exactly---its critic is trained on version-mixed data, or its group-relative
advantage is shared across a version-switched trajectory. The gap $\varepsilon_{\mathrm{sub}}$
is therefore a \emph{lower bound} on the true substitution error; even in this idealized case
it is nonzero for $\sigma>0$, whereas it vanishes identically for DORA ($\sigma=0$).

\subsection{Discussion: Consequences for DORA}
\label{subsec:pi-discussion}

Theorem~\ref{thm:mono-improve} and Corollary~\ref{cor:adv-bias} together give a
policy-improvement reading of DORA's single-policy-per-trajectory design along three axes,
each mapping to a claim in the main text.

\paragraph{No algorithmic correction (Section~\ref{sec:intro}, ``without algorithmic compromises'').}
By Corollary~\ref{cor:adv-bias}, DORA's $\sigma = 0$ makes the advantage reduction exact, so
its surrogate is realized by the unmodified GRPO objective. Partial
rollout's $\sigma > 0$ breaks the reduction, which is why concrete systems must compensate
with decoupled objectives~\citep{fu2025areal} or gradient masking~\citep{team2025kimi}.
DORA's freedom from such corrections is thus a direct consequence of trajectory-level mixing,
not an engineering simplification.

\paragraph{Well-defined importance ratio (Section~\ref{sec:methods}).}
Because each DORA trajectory corresponds to a single behavior version $\pi_{w_i}$, its
sequence-level ratio $\prod_t \pi_\theta(a_t\mid\cdot)/\pi_{w_i}(a_t\mid\cdot)$ is a
well-defined single-policy ratio. A trajectory stitched from multiple versions corresponds to
no single behavior policy, so its sequence-level ratio loses this interpretation at segment
boundaries---the algorithmic root of the corrections above.

\paragraph{Staleness cost and its control (Section~\ref{sec:methods}, staleness bound $K$).}
The total-variation term in~\eqref{eq:mono-improve} grows with the deviation
$D_{\mathrm{TV}}(\pi_\theta, \pi_{w_i};s)$ between the target policy and each behavior version.
DORA controls this term on the sampling side: the sliding window enforces a deterministic
staleness bound $v(\theta) - v(w_i) \le K$ (Section~\ref{sec:methods}), so the summands with
the largest deviation are excluded by construction. This is the improvement-side counterpart
of the convergence--throughput role of $K$ discussed in the main text and its limitation
(manual configuration, reliance on clipping). We note the trade-off honestly: step/segment-level
mixing can reduce this deviation faster, since switching continually refreshes trajectories
toward the latest version, whereas DORA holds long-tail trajectories at their dispatch version
and instead bounds the deviation through $K$. The two paradigms are therefore not uniformly
ordered; DORA trades a higher per-trajectory staleness ceiling---bounded by $K$---for an exact
advantage reduction (Corollary~\ref{cor:adv-bias}), a well-defined single-policy ratio, and the
KV-Cache equivalence that enables zero-re-prefill migration (Section~\ref{sec:methods}).

\section{Proofs for the Policy-Improvement Analysis}
\label{sec:pi-proofs}
This appendix collects the proofs for the results of Appendix~\ref{sec:policy-improvement}.
All arguments are carried out on the \emph{augmented} MDP, so that the mixed behavior policy
is treated as a single Markov policy throughout; the per-version form (for DORA) is then
recovered through Lemma~\ref{lem:adv-reduction}. We first record the regularity conditions
and the performance-difference identity (Appendix~\ref{subsec:pi-regularity}), then prove
Lemma~\ref{lem:adv-reduction} (Appendix~\ref{subsec:proof-lemma}),
Theorem~\ref{thm:mono-improve} (Appendix~\ref{subsec:proof-theorem}), and
Corollary~\ref{cor:adv-bias} (Appendix~\ref{subsec:proof-corollary}).

\subsection{Regularity Conditions and the Performance-Difference Identity}
\label{subsec:pi-regularity}

We work in the token-level MDP of Appendix~\ref{subsec:pi-setup}, with augmented state space
$\tilde{\mathcal{S}} = \mathcal{S}\times\mathcal{I}$, mixed behavior policy
$\beta(a\mid s,i) := \pi_{w_i}(a\mid s)$, and index-transition kernel $q(i'\mid i)$. The
augmented MDP inherits the reward and environment transition of the base MDP, while the index
evolves according to $q$ independently of the action. Its initial state is drawn as
$(s_0, i_0) \sim \rho_0 \times \alpha$, and the target policy $\pi_\theta$ is lifted to the
augmented space by ignoring the index, $\pi_\theta(a\mid s,i) := \pi_\theta(a\mid s)$. Since
the augmented transition is the product of the base transition and the index kernel $q$, it is
a valid Markov kernel; consequently the state-distribution and performance-difference results
below, established for general MDPs by Kakade and Langford~\citep{kakade2002approximately} and
Achiam et al.~\citep{achiam2017constrained}, apply on the augmented MDP verbatim.

\paragraph{Horizon convention.} An LLM response is a finite length-$H$ trajectory, whereas the
bounds below are stated in the infinite-horizon discounted form for continuity with the
classical policy-improvement literature~\citep{kakade2002approximately, schulman2015trust,
achiam2017constrained}. The same decomposition holds in the finite-horizon setting, with the
factor $1/(1-\gamma)$ replaced by a horizon-dependent constant; we adopt the discounted
notation throughout and do not track this substitution.

We assume:
\begin{itemize}
\item \textbf{Discounting.} $\gamma \in (0,1)$, so every Bellman operator below is a
contraction and the value functions are well defined.
\item \textbf{Bounded advantage.} There exists $A_{\max} < \infty$ with
$|A^{\pi}(s,a)| \le A_{\max}$ for the policies considered.
\item \textbf{Common support.} For every behavior version $\pi_{w_i}$ entering the objective,
$\pi_\theta(a\mid s) > 0 \Rightarrow \pi_{w_i}(a\mid s) > 0$, so the ratio
$\pi_\theta/\pi_{w_i}$ is well defined.
\item \textbf{Bounded ratio.} $\pi_\theta(a\mid s)/\pi_{w_i}(a\mid s) \le M_\rho$, as enforced
in practice by the clipping mechanism of PPO/GRPO~\citep{schulman2017proximal, shao2024deepseekmath}.
\end{itemize}

We use the performance-difference lemma~\citep{kakade2002approximately} on the augmented MDP:
for any augmented-space policies $\pi, \pi'$,
\begin{equation}
J(\pi) - J(\pi') = \frac{1}{1-\gamma}\,
\mathbb{E}_{(s,i)\sim d_{\pi}}\!\Big[\mathbb{E}_{a\sim\pi(\cdot\mid s,i)}\big[A^{\pi'}((s,i),a)\big]\Big],
\label{eq:pdl}
\end{equation}
and its consequence, the single-behavior trust-region improvement
bound~\citep{schulman2015trust, achiam2017constrained}: for a behavior policy $\mu$ whose
support covers $\pi$,
\begin{equation}
J(\pi) - J(\mu) \;\ge\;
\frac{1}{1-\gamma}\,\mathbb{E}_{(s,i)\sim d_{\mu},\, a\sim\mu(\cdot\mid s,i)}\!\Big[\tfrac{\pi(a\mid s,i)}{\mu(a\mid s,i)}A^{\mu}((s,i),a)\Big]
\;-\;
\frac{2\gamma\, C_{\pi,\mu}}{(1-\gamma)^2}\,
\mathbb{E}_{(s,i)\sim d_{\mu}}\!\big[D_{\mathrm{TV}}(\pi,\mu;(s,i))\big],
\label{eq:tr-bound}
\end{equation}
with $C_{\pi,\mu} := \max_{(s,i)}\big|\mathbb{E}_{a\sim\pi(\cdot\mid s,i)}[A^{\mu}((s,i),a)]\big|$.
The bound~\eqref{eq:tr-bound} follows from~\eqref{eq:pdl} by replacing the on-policy visitation
$d_\pi$ with the behavior visitation $d_\mu$ and controlling the induced distribution mismatch
via the average total-variation bound of Achiam et al.~\citep{achiam2017constrained}, which
holds on the augmented MDP by the remark above.

\subsection{Proof of Lemma~\ref{lem:adv-reduction} (Advantage Reduction)}
\label{subsec:proof-lemma}

\begin{proof}
Assume trajectory-level mixing, $q(i'\mid i) = \mathbf{1}\{i'=i\}$. Under this kernel the index
component is invariant: starting from $(s,i)$, every subsequent augmented state has the form
$(s',i)$ with the same $i$. Hence along any trajectory initialized at $(s,i)$ the behavior
policy $\beta(\cdot\mid\cdot,i) = \pi_{w_i}(\cdot\mid\cdot)$ acts as the fixed base-MDP policy
$\pi_{w_i}$, and the augmented transition reduces to the base transition under $\pi_{w_i}$.
Since $\gamma<1$, the augmented Bellman equation has a unique solution, which therefore
coincides with the base-MDP value of $\pi_{w_i}$:
\begin{equation*}
V^{\beta}(s,i)
= \mathbb{E}\Big[\textstyle\sum_{t\ge0}\gamma^t r(s_t,a_t)\,\Big|\, s_0=s,\ \text{index frozen at } i,\ a_t\sim\pi_{w_i}\Big]
= V^{\pi_{w_i}}(s),
\end{equation*}
and likewise $Q^{\beta}((s,i),a) = Q^{\pi_{w_i}}(s,a)$. Subtracting gives
$A^{\beta}((s,i),a) = A^{\pi_{w_i}}(s,a)$ for every $(s,i)$; this is a property of the value
functions and does not depend on the visitation distribution. Finally, because the index is
drawn once as $i_0\sim\alpha$ and then held fixed while the base state evolves under $\pi_{w_i}$
from $s_0\sim\rho_0$, the augmented visitation factorizes as
$d_\beta(s,i) = \alpha_i\, d_{\pi_{w_i}}(s)$, where $d_{\pi_{w_i}}$ is the base-MDP visitation of
$\pi_{w_i}$ started from $\rho_0$. Integrating the reward identity over $d_\beta$ then gives
$J_{\mathrm{mix}} = J(\beta) = \sum_i \alpha_i J(\pi_{w_i})$.
\end{proof}

\subsection{Proof of Theorem~\ref{thm:mono-improve} (Monotonic Improvement)}
\label{subsec:proof-theorem}

\begin{proof}
We apply the trust-region bound~\eqref{eq:tr-bound} on the augmented MDP with target
$\pi = \pi_\theta$ (lifted) and behavior $\mu = \beta$; this step uses only that $\beta$ is a
Markov policy on the augmented MDP and holds for any kernel $q$. Since $\pi_\theta$ ignores the
index and $\beta(\cdot\mid s,i) = \pi_{w_i}(\cdot\mid s)$, the two policies at a fixed $(s,i)$
are $\pi_\theta(\cdot\mid s)$ and $\pi_{w_i}(\cdot\mid s)$, so both the ratio and the total
variation reduce to base-MDP quantities,
\begin{equation*}
\tfrac{\pi_\theta(a\mid s,i)}{\beta(a\mid s,i)} = \tfrac{\pi_\theta(a\mid s)}{\pi_{w_i}(a\mid s)},
\qquad
D_{\mathrm{TV}}(\pi_\theta,\beta;(s,i)) = D_{\mathrm{TV}}(\pi_\theta,\pi_{w_i};s).
\end{equation*}
With $J(\beta) = J_{\mathrm{mix}}$, \eqref{eq:tr-bound} becomes the augmented-state bound
\begin{equation}
J(\pi_\theta) - J_{\mathrm{mix}}
\;\ge\;
\frac{1}{1-\gamma}\,\mathbb{E}_{(s,i)\sim d_{\beta},\, a\sim\pi_{w_i}}\!\Big[\tfrac{\pi_\theta(a\mid s)}{\pi_{w_i}(a\mid s)}A^{\beta}((s,i),a)\Big]
\;-\;
\frac{2\gamma\, C_{\pi_\theta,\beta}}{(1-\gamma)^2}\,
\mathbb{E}_{(s,i)\sim d_{\beta}}\!\big[D_{\mathrm{TV}}(\pi_\theta,\pi_{w_i};s)\big],
\label{eq:aug-bound}
\end{equation}
which holds for any kernel $q$. We now specialize to trajectory-level mixing and expand via
Lemma~\ref{lem:adv-reduction}. The factorization $d_\beta(s,i) = \alpha_i\, d_{\pi_{w_i}}(s)$
turns every augmented expectation $\mathbb{E}_{(s,i)\sim d_\beta}[\,\cdot\,]$ into
$\sum_i \alpha_i\, \mathbb{E}_{s\sim d_{\pi_{w_i}}}[\,\cdot\,]$, and the reduction
$A^{\beta}((s,i),a) = A^{\pi_{w_i}}(s,a)$ replaces the augmented advantage by the per-version
advantage. The surrogate term becomes
\begin{equation*}
\frac{1}{1-\gamma}\sum_i \alpha_i\,
\mathbb{E}_{s\sim d_{\pi_{w_i}},\, a\sim\pi_{w_i}}\!\Big[\tfrac{\pi_\theta(a\mid s)}{\pi_{w_i}(a\mid s)}A^{\pi_{w_i}}(s,a)\Big]
= \sum_i \alpha_i\, L_{\pi_{w_i}}(\pi_\theta),
\end{equation*}
with $L_{\pi_{w_i}}$ as in~\eqref{eq:surrogate}, and the penalty term becomes
$\frac{2\gamma C_{\pi_\theta,\beta}}{(1-\gamma)^2}\sum_i \alpha_i\,
\mathbb{E}_{s\sim d_{\pi_{w_i}}}[D_{\mathrm{TV}}(\pi_\theta,\pi_{w_i};s)]$. Finally, since the
maximum over the product space factorizes and $A^{\beta}((s,i),a)=A^{\pi_{w_i}}(s,a)$,
\begin{equation*}
C_{\pi_\theta,\beta}
= \max_{(s,i)}\big|\mathbb{E}_{a\sim\pi_\theta}[A^{\beta}((s,i),a)]\big|
= \max_i \max_{s}\big|\mathbb{E}_{a\sim\pi_\theta}[A^{\pi_{w_i}}(s,a)]\big|
= \max_i C_{\pi_\theta,\pi_{w_i}}.
\end{equation*}
Substituting these three identities into~\eqref{eq:aug-bound} yields~\eqref{eq:mono-improve}.
\end{proof}

\subsection{Proof of Corollary~\ref{cor:adv-bias} (Advantage-Substitution Bias)}
\label{subsec:proof-corollary}

Throughout this proof we work under the stylized kernel
$q(i'\mid i) = (1-\sigma)\mathbf{1}\{i'=i\} + \sigma\,\kappa(i'\mid i)$ of
Appendix~\ref{subsec:pi-setup}, and $A^{\pi_{w_i}}$ denotes the advantage of the current-segment
version, i.e., the version indexed by the augmented state $(s,i)$.

\begin{proof}
When $\sigma = 0$ the kernel is the identity transition and
Lemma~\ref{lem:adv-reduction} gives $A^{\beta}((s,i),a) = A^{\pi_{w_i}}(s,a)$ pointwise, hence
$\varepsilon_{\mathrm{sub}}(i) = 0$. When $\sigma > 0$, the augmented value obeys
\begin{equation}
V^{\beta}(s,i)
= \mathbb{E}_{a\sim\pi_{w_i}}\Big[ r(s,a) + \gamma\,
\mathbb{E}_{s'}\big[ (1-\sigma) V^{\beta}(s',i) + \sigma\,
\mathbb{E}_{i'\sim\kappa(\cdot\mid i)} V^{\beta}(s',i') \big] \Big],
\label{eq:beta-bellman}
\end{equation}
which differs from the $\sigma=0$ recursion (whose solution is $V^{\pi_{w_i}}$) through the
$\sigma$-weighted term: with probability $\sigma$ the continuation is evaluated under a version
$i'\sim\kappa(\cdot\mid i)$. Whenever $\pi_{w_{i'}} \neq \pi_{w_i}$ we have
$V^{\beta}(\cdot,i')\neq V^{\pi_{w_i}}(\cdot)$, so the perturbing term is nonzero and, barring
exact cancellation in expectation, $A^{\beta}((s,i),a) \neq A^{\pi_{w_i}}(s,a)$, giving
$\varepsilon_{\mathrm{sub}}(i) > 0$. The $\sigma$-weighting of the perturbing term
in~\eqref{eq:beta-bellman} makes the deviation grow with $\sigma$ and with the inter-version
value gap.

\emph{Entry into the improvement bound.} Let $\hat{L}(\pi_\theta)$ be the surrogate optimized
with the single-version advantage $A^{\pi_{w_i}}$, and $L^{\beta}(\pi_\theta)$ the surrogate in
the augmented bound~\eqref{eq:aug-bound} with the true mixed advantage $A^{\beta}$; both are
evaluated on the same sampling distribution $d_\beta$. Their difference is
\begin{equation*}
L^{\beta}(\pi_\theta) - \hat{L}(\pi_\theta)
= \frac{1}{1-\gamma}\,\mathbb{E}_{(s,i)\sim d_\beta,\, a\sim\pi_{w_i}}
\Big[ \tfrac{\pi_\theta(a\mid s)}{\pi_{w_i}(a\mid s)}
\big( A^{\beta}((s,i),a) - A^{\pi_{w_i}}(s,a) \big) \Big].
\end{equation*}
The total-variation penalty in~\eqref{eq:aug-bound} contains no advantage and is unchanged by
the substitution, so it cancels in $L^{\beta}-\hat{L}$; only the surrogate is affected.
Bounding the ratio by $M_\rho$ and applying the definition~\eqref{eq:adv-bias},
\begin{equation*}
\big| L^{\beta}(\pi_\theta) - \hat{L}(\pi_\theta) \big|
\;\le\; \frac{M_\rho}{1-\gamma}\, \mathbb{E}_{i}\big[\varepsilon_{\mathrm{sub}}(i)\big].
\end{equation*}
Writing $\hat{L} = L^{\beta} - (L^{\beta}-\hat{L})$ in the augmented bound~\eqref{eq:aug-bound}
and lower-bounding $L^{\beta}-\hat{L} \ge -|L^{\beta}-\hat{L}|$ moves this gap to the
right-hand side as an additive penalty, yielding~\eqref{eq:mono-improve-bias}. At $\sigma=0$ the
penalty vanishes and the standard GRPO objective realizes the bound's surrogate exactly.
\end{proof}

\section{A Control-Variate View of Baseline Choice in DORA vs.\ Partial Rollout}\label{sec:baseline-view}
Complementary to the policy-improvement analysis of Appendix~\ref{sec:policy-improvement},
which examined how single-version generation affects the monotonic-improvement guarantee,
this appendix turns to its effect on the \emph{variance of the policy-gradient estimator}.
DORA and partial-rollout methods~\citep{team2025kimi, wu2025llamarl, du2025ulorl, fu2025areal}
differ along a second, subtler algorithmic dimension: \emph{the choice of constant baseline
used in advantage estimation for group-relative policy optimization}. We examine this
dimension through the classical control-variate framework for policy
gradients~\citep{weaver2001optimal, greensmith2004variance}. Our goal is again modest: to
provide a clean lens through which two baseline choices can be located and compared, and to
describe the structure of the variance gap between them within a stylized abstraction.

\paragraph{Scope and abstraction.}
The analysis below is a \emph{stylized comparison of two constant baselines under a common
sampling abstraction}, not a faithful model of either system's sampling process. We reuse the
single-version notation of Appendix~\ref{sec:policy-improvement}: $\pi_\theta$ is the target
policy and $\{\pi_{w_k}\}$ the lagged versions with schedule $\{\alpha_k\}$. Concretely, we
adopt the following \emph{single-version sampling abstraction}---the $\sigma=0$ regime of
Appendix~\ref{sec:policy-improvement}: each sample is a triple $(k, s, a)$ with
$k \sim \alpha$, $s \sim \rho_0$, and $a \sim \pi_{w_k}(\cdot\mid s)$, so that the response is
generated end-to-end under a single policy version $\pi_{w_k}$. This abstraction is faithful
to DORA and serves as a stylized stand-in for version-mixing methods, whose stitched
trajectories it does not model in full. Both estimators studied below are defined with respect
to this abstraction; they share the same sampling distribution and the same importance-weighted
score $S_k(s,a) = (\pi_\theta/\pi_{w_k})\,\nabla_\theta \log \pi_\theta(a\mid s)$, and differ
only in the constant baseline subtracted from the reward.

This abstraction is faithful to DORA's actual sampling process, since DORA generates each trajectory end-to-end under a single policy version. It is \emph{not} faithful to partial-rollout methods, whose trajectories are stitched from segments produced under multiple policy versions within the staleness window.\footnote{For partial-rollout systems that apply token-level importance correction, the stitched and single-version sampling distributions agree in expectation, which provides a partial justification for the abstraction; their variance behavior nonetheless differs in general.} We make no claim that the analysis below captures partial rollout's true sampling variance. Instead, we use the version-specific baseline $\mu_k$ as the formal counterpart of DORA's design (which can name the version of any trajectory) and the schedule-averaged baseline $\bar{\mu} = \sum_k p_k \mu_k$ as the natural representative of the partial-rollout family within this abstraction (which by design does not single out any one version of a trajectory). The resulting comparison isolates the dimension of \emph{baseline choice} from orthogonal mechanisms used by concrete partial-rollout systems---decoupled objectives, importance corrections across stitched segments, gradient masking, and so on---which lie outside our scope. Following GRPO~\citep{shao2024deepseekmath} and the asynchronous RL systems built on it~\citep{fu2025areal, team2025kimi, wu2025llamarl}, we restrict attention to constant (state-independent) baselines; learned state-dependent baselines (as in PPO) are outside our scope. All proofs are deferred to Appendix~\ref{sec:baseline-proofs}.

\subsection{Setup and Notation}
\label{subsec:baseline-setup}

We adopt the sequence-level contextual bandit formulation~\citep{shao2024deepseekmath, ahmadian2024back}: states $s \sim \rho_0$ are prompts drawn from a static distribution, actions $a \sim \pi(\cdot|s)$ are full responses, and rewards $R(s, a) \in [0, R_{\max}]$ are verifiable correctness signals. Let $\pi_\theta$ denote the target policy and $\{\pi_{w_k}\}_{k=1}^K$ the lagged policies with schedule $\{p_k\}$. Define
\begin{equation}
\mu_k = \mathbb{E}_{s \sim \rho_0,\, a \sim \pi_{w_k}}[R(s, a)],
\qquad
\bar{\mu} = \sum_k p_k \mu_k,
\label{eq:mu-defs}
\end{equation}
and the importance-weighted score
\begin{equation}
S_k(s, a) = \frac{\pi_\theta(a|s)}{\pi_{w_k}(a|s)} \nabla_\theta \log \pi_\theta(a|s),
\label{eq:score}
\end{equation}
where $\nabla_\theta$ denotes the gradient with respect to the policy parameters $\theta$. Note that $\bar{\mu}$ is a deterministic constant, whereas $\mu_k$ is a random variable depending on the sampled version index $k$. We define three local second-order quantities, all conditional on a fixed version $k$ with $(s, a) \sim \rho_0 \times \pi_{w_k}$:
\begin{equation}
B_k = \mathbb{E}\!\left[\|S_k\|^2 \,\middle|\, k\right],
\quad
C_k = \mathbb{E}\!\left[(R - \mu_k)\|S_k\|^2 \,\middle|\, k\right],
\quad
V_k = \mathbb{E}\!\left[(R - \mu_k)^2 \|S_k\|^2 \,\middle|\, k\right].
\label{eq:moments}
\end{equation}
Since $\mu_k = \mathbb{E}[R \mid k]$ and $B_k = \mathbb{E}[\|S_k\|^2 \mid k]$, the quantity $C_k$ admits the equivalent interpretation
\begin{equation}
C_k = \mathrm{Cov}_{(s,a)\mid k}\!\left(R,\,\|S_k\|^2\right),
\label{eq:Ck-as-cov}
\end{equation}
i.e., the within-version covariance between reward and squared score. Throughout, $\mathrm{Var}(\cdot)$ denotes $\mathbb{E}[\|\cdot - \mathbb{E}\cdot\|^2] = \mathrm{tr}(\mathrm{Cov}(\cdot))$ for vector-valued random variables, and $\mathrm{Var}_p(\cdot)$, $\mathrm{Cov}_p(\cdot, \cdot)$ denote variance and covariance under the schedule distribution $p$ over versions.

The two estimators we compare are
\begin{equation}
\hat{g}_{\mu_k} \;=\; (R - \mu_k)\, S_k,
\qquad
\hat{g}_{\bar{\mu}} \;=\; (R - \bar{\mu})\, S_k,
\label{eq:estimators}
\end{equation}
both with $k \sim p$ and $(s, a) \sim \rho_0 \times \pi_{w_k}$ under the single-version sampling abstraction. Within this abstraction, $\hat{g}_{\mu_k}$ corresponds to DORA's design (which can identify the version of every trajectory), and $\hat{g}_{\bar{\mu}}$ corresponds to the partial-rollout family within the abstraction (which by design does not single out one version per trajectory).\footnote{We re-emphasize that $\hat{g}_{\bar{\mu}}$ is a stylized stand-in: concrete partial-rollout systems include additional mechanisms---decoupled objectives, importance corrections across stitched segments, gradient masking---that are not represented by $\hat{g}_{\bar{\mu}}$. The variance comparison below reflects only the baseline-choice dimension.} We analyze single-sample variance throughout; batch-averaged variance scales as $1/G$ where $G$ is the batch size, so the comparative results extend directly to the batched case. Under standard importance-sampling regularity (Appendix~\ref{subsec:regularity}), both estimators are unbiased for $\nabla_\theta J(\pi_\theta)$.

\subsection{Optimal Constant Baseline}
\label{subsec:optimal-baseline}

We first characterize the variance-minimizing constant baseline, which serves as the reference point against which both $\mu_k$ and $\bar{\mu}$ are approximations.

\begin{lemma}[Optimal Constant Control-Variate Baseline]
\label{lem:optimal-baseline}
For any fixed version $k$ with $B_k > 0$ and $\mathbb{E}[R^2 \|S_k\|^2 \mid k] < \infty$, the constant baseline $b \in \mathbb{R}$ minimizing $\mathrm{Var}((R - b)S_k \mid k)$ is unique and given by
\begin{equation}
b^\star_k \;=\; \frac{\mathbb{E}[R\,\|S_k\|^2 \mid k]}{\mathbb{E}[\|S_k\|^2 \mid k]} \;=\; \mu_k + \frac{C_k}{B_k}.
\label{eq:optimal-baseline}
\end{equation}
\end{lemma}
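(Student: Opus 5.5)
The plan is to write the conditional variance as an explicit quadratic in $b$ and minimize it directly. Throughout, condition on the version $k$ and abbreviate $S = S_k$ and $R = R(s,a)$, with $(s,a)\sim\rho_0\times\pi_{w_k}$. Under the regularity assumptions, $\mathbb{E}[(R-b)S\mid k]$ does not depend on $b$. The reason is that $\mathbb{E}[S\mid k] = \mathbb{E}_{\pi_\theta}[\nabla_\theta\log\pi_\theta] = 0$, by the common-support condition and the score-function identity. Call this common mean $g$.

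Since $\mathrm{Var}(X\mid k)=\mathbb{E}[\|X\|^2\mid k]-\|\mathbb{E}[X\mid k]\|^2$, the term $\|g\|^2$ is independent of $b$. Minimizing the variance is therefore equivalent to minimizing the conditional second moment
\begin{equation*}
f(b) \;=\; \mathbb{E}\big[(R-b)^2\|S\|^2 \,\big|\, k\big] \;=\; \mathbb{E}[R^2\|S\|^2\mid k] \;-\; 2b\,\mathbb{E}[R\|S\|^2\mid k] \;+\; b^2 B_k .
\end{equation*}
Every term here is finite. The first is finite by the assumption $\mathbb{E}[R^2\|S_k\|^2\mid k]<\infty$. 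The cross term is finite by Cauchy--Schwarz, or more simply because $R\in[0,R_{\max}]$ is bounded. For $B_k$, I will assume it is finite, which is implicit in its definition. If needed, finiteness of $B_k$ also follows when $R$ is bounded away from zero on a positive-measure set, but I would simply state $B_k<\infty$ as part of the standing regularity.

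With $B_k>0$, $f$ is a strictly convex quadratic in $b$. Its unique minimizer is $b^\star_k=\mathbb{E}[R\|S_k\|^2\mid k]/B_k$, obtained by setting $f'(b)=-2\mathbb{E}[R\|S\|^2\mid k]+2bB_k=0$. Strict convexity, guaranteed by $B_k>0$, gives uniqueness.

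For the second form, write $R=\mu_k+(R-\mu_k)$. Then
\[
\mathbb{E}[R\|S\|^2\mid k]=\mu_k B_k+C_k,
\]
using the definition of $C_k$ in \eqref{eq:moments}. Dividing by $B_k$ gives $b^\star_k=\mu_k+C_k/B_k$.

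The only delicate point is the claim that the mean is independent of $b$. This relies on $\mathbb{E}[S_k\mid k]=0$, which requires common support and the interchange of gradient and sum/integral. Even without that identity the argument still goes through. In general the variance equals $f(b)-\|\mathbb{E}[RS]-b\,\mathbb{E}[S]\|^2$, and the extra terms are again quadratic in $b$. So the key step to verify carefully is the vanishing of $\mathbb{E}[S_k\mid k]$, which I would cite from the importance-sampling regularity already assumed in Appendix~\ref{subsec:regularity}.
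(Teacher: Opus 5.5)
Your proof is correct and follows the paper's argument essentially step for step. You use $\mathbb{E}[(R-b)S_k\mid k]=\nabla_\theta J(\pi_\theta)$ to drop the $b$-independent squared-mean term, minimize the strictly convex quadratic $f(b)$ via its first-order condition, and add and subtract $\mu_k$ to obtain $\mu_k + C_k/B_k$.

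Two of your side remarks are inaccurate, though harmless. Without $\mathbb{E}[S_k\mid k]=0$ the variance minimizer becomes $\bigl(\mathbb{E}[R\|S_k\|^2\mid k]-\langle \mathbb{E}[RS_k\mid k],\mathbb{E}[S_k\mid k]\rangle\bigr)/\bigl(B_k-\|\mathbb{E}[S_k\mid k]\|^2\bigr)$ rather than the stated $b^\star_k$, so the argument does not ``still go through'' for this formula. Also, $R$ being bounded away from zero only on a positive-measure set does not give $B_k<\infty$; the paper's bounded-score condition $\|S_k\|^2\le M$ supplies that directly.
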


Equation~\eqref{eq:optimal-baseline} expresses $b^\star_k$ as the sum of two additive terms: the version-mean reward $\mu_k$, and a correction term $C_k / B_k$ capturing the within-version reward--gradient coupling. The second term requires per-sample gradient norm estimates and is generally impractical to compute in sequence generation settings---a difficulty noted explicitly by~\citet{hao2025on}, who similarly invoke simplifying assumptions to obtain a tractable form of the optimal baseline. Both baselines we consider drop this correction term:
\begin{itemize}
\item $\mu_k$ retains version-level reward information; within our abstraction, this is the baseline available to a method that can identify the version of every trajectory. For a given $k$, $\mu_k = b^\star_k$ when $C_k/B_k = 0$.
\item $\bar{\mu} = \sum_k p_k \mu_k$ aggregates across versions; within our abstraction, this is the baseline available to a method that does not. The condition $\bar\mu = b^\star_k$ for all $k$ requires both $C_k/B_k = 0$ for all $k$ and $\mu_k$ constant in $k$.
\end{itemize}
The next subsection quantifies the variance gap between these two choices.

\subsection{Exact Variance Decomposition}
\label{subsec:variance-gap}

The variance behavior of the two baselines is characterized by the following exact identity.

\begin{theorem}[Variance Gap Decomposition]
\label{thm:variance-gap}
Under the single-version sampling abstraction and the regularity conditions of Appendix~\ref{subsec:regularity}, the variance gap between the two estimators decomposes as
\begin{equation}
\Delta\mathrm{Var}
\;:=\;
\mathrm{Var}(\hat{g}_{\bar{\mu}}) - \mathrm{Var}(\hat{g}_{\mu_k})
\;=\;
\underbrace{\mathbb{E}_{k \sim p}\!\left[B_k\,(\mu_k - \bar{\mu})^2\right]}_{\text{drift term}}
\;+\;
\underbrace{2\,\mathrm{Cov}_p(\mu_k,\, C_k)}_{\text{coupling term}}.
\label{eq:variance-gap}
\end{equation}
\end{theorem}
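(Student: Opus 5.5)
The plan is to use $\mathrm{Var}(X)=\mathbb{E}\|X\|^2-\|\mathbb{E}X\|^2$ and reduce the gap to a difference of second moments. Both estimators in \eqref{eq:estimators} have the same mean $\nabla_\theta J(\pi_\theta)$, so the $\|\mathbb{E}\cdot\|^2$ terms cancel. I would check this directly rather than only quoting it. Conditional on $k$, the baseline contribution is $b\,\mathbb{E}[S_k\mid k]$. Under the common-support condition,
\[
\mathbb{E}[S_k\mid k]=\mathbb{E}_{s\sim\rho_0}\sum_a \pi_\theta(a\mid s)\nabla_\theta\log\pi_\theta(a\mid s)=\mathbb{E}_{s\sim\rho_0}\nabla_\theta\sum_a\pi_\theta(a\mid s)=0.
\]
This holds for any $b$ that is constant given $k$. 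That covers both $\bar\mu$, which is deterministic, and $\mu_k$, which is a function of $k$ only. Hence $\mathbb{E}[\hat g_{\mu_k}]=\mathbb{E}[\hat g_{\bar\mu}]=\mathbb{E}[R\,S_k]$, and therefore
\[
\Delta\mathrm{Var}=\mathbb{E}\big[(R-\bar\mu)^2\|S_k\|^2\big]-\mathbb{E}\big[(R-\mu_k)^2\|S_k\|^2\big].
\]

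Next I would condition on $k$ and write $R-\bar\mu=(R-\mu_k)+(\mu_k-\bar\mu)$. Since $\mu_k-\bar\mu$ is constant given $k$, expanding the square and using the definitions \eqref{eq:moments} gives the conditional gap
\[
2(\mu_k-\bar\mu)\,C_k+(\mu_k-\bar\mu)^2B_k .
\]
Taking $\mathbb{E}_{k\sim p}$, the second term is exactly the drift term. For the first term, $\bar\mu=\mathbb{E}_p[\mu_k]$ gives $\mathbb{E}_p[\mu_k-\bar\mu]=0$. Therefore
\[
\mathbb{E}_p[(\mu_k-\bar\mu)C_k]=\mathbb{E}_p[(\mu_k-\bar\mu)(C_k-\mathbb{E}_pC_k)]=\mathrm{Cov}_p(\mu_k,C_k),
\]
which yields the coupling term and completes \eqref{eq:variance-gap}.

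The algebra itself is routine. The step that needs care is the justification: the cancellation of the means, and the finiteness needed to split expectations. I would handle finiteness by assuming $\mathbb{E}[R^2\|S_k\|^2\mid k]<\infty$, as in Lemma~\ref{lem:optimal-baseline}. Then $B_k$, $C_k$ and $V_k$ are finite because $R\in[0,R_{\max}]$, and the mixture over finitely many versions is finite as well. I would also note that the schedule is written $\alpha_k$ in places and $p_k$ elsewhere. I would treat these as the same distribution, since $\bar\mu$ must be the $p$-mean of $\mu_k$ for the covariance identification to hold.
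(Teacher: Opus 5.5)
Your proof is correct and follows essentially the paper's argument. The paper cancels the mean terms through the law of total variance, using that $\mathbb{E}[\hat g\mid k]=\nabla_\theta J(\pi_\theta)$ for both estimators, whereas you cancel them directly at the unconditional level, which needs only that $b\,\mathbb{E}[S_k\mid k]=0$; the completion of the square around $\mu_k$ and the covariance step via $\mathbb{E}_p[\mu_k-\bar\mu]=0$ are the same as in the paper.

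One small correction to your finiteness remark: $\mathbb{E}[R^2\|S_k\|^2\mid k]<\infty$ together with $R\in[0,R_{\max}]$ does not make $B_k$ finite, since the score can be large where $R=0$. Justify finiteness of $B_k$, $C_k$ and $V_k$ instead by the bounded-score condition $\|S_k\|^2\le M$, which is among the theorem's hypotheses.
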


\paragraph{Reading the decomposition.}
The decomposition isolates two distinct sources of variance difference. The drift term is non-negative by construction; the coupling term has indeterminate sign, so $\Delta\mathrm{Var}$ itself can in principle have either sign.

\begin{itemize}
\item The \textbf{drift term} $\mathbb{E}_{k\sim p}[B_k(\mu_k - \bar{\mu})^2]$ is the cross-version pass-rate variance weighted by gradient energy. It quantifies the cost, within the abstraction, of replacing the version-specific mean $\mu_k$ with a single cross-version aggregate $\bar{\mu}$: it grows whenever pass rates differ across versions, and vanishes when they coincide.
\item The \textbf{coupling term} $2\,\mathrm{Cov}_p(\mu_k, C_k)$ captures whether higher-pass-rate versions also tend to exhibit larger reward--gradient covariance (positive sign, amplifying the gap) or the opposite (negative sign, partially offsetting it). Its sign and magnitude depend on the joint distribution of $(\mu_k, C_k)$ across versions.
\end{itemize}

\subsection{Structural Observations}
\label{subsec:when-gap-matters}

Theorem~\ref{thm:variance-gap} reduces $\Delta\mathrm{Var}$ to two terms with distinct structural roles. We record a Cauchy--Schwarz bound on the coupling term and note what it does and does not imply.

\paragraph{Cauchy--Schwarz upper bound on the coupling term.}
Applying Cauchy--Schwarz under the schedule distribution $p$,
\begin{equation}
\bigl|\,2\,\mathrm{Cov}_p(\mu_k, C_k)\,\bigr|
\;\leq\;
2\,\sqrt{\mathrm{Var}_p(\mu_k)\,\mathrm{Var}_p(C_k)}.
\label{eq:cs-bound}
\end{equation}
Under the bounded-reward and bounded-score conditions of Appendix~\ref{subsec:regularity}, $|C_k| \leq 2 R_{\max} M$ for every $k$, so $\mathrm{Var}_p(C_k) < \infty$ and the inequality is well-defined. The bound expresses that the coupling term is constrained by the cross-version variability of $\mu_k$ and of the within-version reward--gradient covariance $C_k$; it does not by itself imply any direction for $\Delta\mathrm{Var}$.

\paragraph{What this analysis does and does not establish.}
We do not claim that $\hat{g}_{\mu_k}$ has uniformly smaller variance than $\hat{g}_{\bar{\mu}}$, even within the abstraction. Theorem~\ref{thm:variance-gap} is explicit that $\Delta\mathrm{Var}$ can have either sign, and the magnitudes of both terms depend on the training regime in ways we do not attempt to characterize. What the analysis does establish, within the single-version sampling abstraction, is: (i) the gap admits an exact decomposition into a non-negative drift term and a sign-indeterminate coupling term, and (ii) the coupling term is structurally bounded by~\eqref{eq:cs-bound}. We do not extrapolate to claims about the variance of full partial-rollout systems, which involve additional mechanisms outside the abstraction.

\subsection{Summary}
\label{subsec:baseline-summary}

This appendix has located two baseline choices within the control-variate framework, under a common single-version sampling abstraction. Both are tractable approximations to the optimal constant baseline $b^\star_k = \mu_k + C_k / B_k$ that drop the impractical coupling term $C_k / B_k$. The version-specific baseline $\mu_k$---available to methods, like DORA, that can identify the version of every trajectory---retains version-level reward information; the schedule-averaged baseline $\bar{\mu}$---used here as the representative within the abstraction of methods that do not---aggregates across versions. Theorem~\ref{thm:variance-gap} gives an exact decomposition of the variance gap into a non-negative drift term and a sign-indeterminate coupling term. The framework's value is descriptive: it offers a structural lens on the algorithmic dimension of single-policy-per-trajectory generation, complementing the system-level benefit (zero-re-prefill migration).

\section{Proofs for the Control-Variate Analysis}
\label{sec:baseline-proofs}
This appendix provides full proofs for the results in Appendix~\ref{sec:baseline-view}. We collect the regularity assumptions and unbiasedness identities in Appendix~\ref{subsec:regularity}, prove Lemma~\ref{lem:optimal-baseline} in Appendix~\ref{subsec:proof-optimal}, and prove Theorem~\ref{thm:variance-gap} in Appendix~\ref{subsec:proof-variance-gap}.

\subsection{Regularity Assumptions and Unbiasedness}
\label{subsec:regularity}

The analysis in Appendix~\ref{sec:baseline-view} relies on the following standard regularity conditions:
\begin{itemize}
\item \textbf{Bounded reward:} $R(s,a) \in [0, R_{\max}]$ almost surely.
\item \textbf{Bounded score:} For all versions $k$, $\|S_k(s, a)\|^2 \leq M$ almost surely under $(s, a) \sim \rho_0 \times \pi_{w_k}$. In practice, this is enforced by the importance-ratio clipping mechanism of PPO and GRPO~\citep{schulman2017proximal, shao2024deepseekmath}, which prevents the score from diverging. Combined with bounded reward, this implies $B_k$, $C_k$, $V_k$ are all finite, with $|C_k| \leq 2 R_{\max} M$ and $V_k \leq R_{\max}^2 M$ for every $k$.
\item \textbf{Non-degenerate gradient energy:} For all $k$, $B_k = \mathbb{E}[\|S_k\|^2 \mid k] \geq B_{\min} > 0$. This ensures that the target policy maintains a nontrivial gradient signal and prevents division-by-zero singularities in the derivation of the optimal baseline.
\item \textbf{Importance-sampling regularity:} The standard support condition $\pi_\theta \ll \pi_{w_k}$ holds for all $k$. By direct change of measure,
\begin{equation}
\mathbb{E}\!\left[S_k(s, a) \,\middle|\, k\right]
\;=\;
\mathbb{E}_{s \sim \rho_0}\!\left[ \int \pi_\theta(a|s) \nabla_\theta \log \pi_\theta(a|s)\, \mathrm{d}a \right]
\;=\;
\mathbb{E}_{s \sim \rho_0}[\nabla_\theta 1]
\;=\; 0,
\label{eq:score-zero}
\end{equation}
using the identity $\int \pi_\theta \nabla_\theta \log \pi_\theta\, \mathrm{d}a = \nabla_\theta \int \pi_\theta\, \mathrm{d}a = \nabla_\theta 1 = 0$. Furthermore, by the same change of measure,
\begin{equation}
\mathbb{E}\!\left[R(s, a)\, S_k(s, a) \,\middle|\, k\right]
\;=\;
\mathbb{E}_{(s,a) \sim \rho_0 \times \pi_\theta}\!\left[R(s, a)\, \nabla_\theta \log \pi_\theta(a|s)\right]
\;=\; \nabla_\theta J(\pi_\theta),
\label{eq:reward-score}
\end{equation}
which is independent of $k$.
\end{itemize}

\paragraph{Conditional mean of a generic baseline-shifted estimator.}
Conditional on a fixed version $k$, any constant $b$ is deterministic. Combining~\eqref{eq:score-zero} and~\eqref{eq:reward-score} yields, for every constant $b \in \mathbb{R}$,
\begin{equation}
\mathbb{E}[(R - b)\,S_k \mid k]
\;=\;
\mathbb{E}[R\,S_k \mid k] \;-\; b\,\mathbb{E}[S_k \mid k]
\;=\;
\nabla_\theta J(\pi_\theta) \;-\; b \cdot 0
\;=\;
\nabla_\theta J(\pi_\theta).
\label{eq:cond-mean}
\end{equation}
Specializing $b = \mu_k$ and $b = \bar{\mu}$ gives $\mathbb{E}[\hat{g}_{\mu_k} \mid k] = \mathbb{E}[\hat{g}_{\bar{\mu}} \mid k] = \nabla_\theta J(\pi_\theta)$. Since the conditional mean is $k$-independent, taking expectation over $k \sim p$ preserves it: both estimators are unbiased for $\nabla_\theta J(\pi_\theta)$.

\subsection{Proof of Lemma~\ref{lem:optimal-baseline}}
\label{subsec:proof-optimal}

We work conditional on a fixed version $k$ throughout this proof; all expectations are over $(s, a) \sim \rho_0 \times \pi_{w_k}$. The proof requires only the conditions stated in Lemma~\ref{lem:optimal-baseline}---namely $B_k > 0$ and $\mathbb{E}[R^2 \|S_k\|^2 \mid k] < \infty$---together with the importance-sampling regularity of Appendix~\ref{subsec:regularity} (used to establish~\eqref{eq:cond-mean}).

By definition of the trace covariance,
\begin{equation}
\mathrm{Var}((R - b)\, S_k \mid k)
\;=\;
\mathbb{E}\!\left[\,\|(R - b) S_k\|^2 \,\middle|\, k\right]
\;-\; \|\mathbb{E}[(R - b)\, S_k \mid k]\|^2.
\label{eq:trace-cov}
\end{equation}
Since $R - b$ is a scalar, $\|(R - b)\,S_k\|^2 = (R - b)^2 \|S_k\|^2$. By~\eqref{eq:cond-mean}, $\mathbb{E}[(R - b)\,S_k \mid k] = \nabla_\theta J(\pi_\theta)$ for every constant $b$, so the second term in~\eqref{eq:trace-cov} is independent of $b$. Minimizing $\mathrm{Var}((R - b)\,S_k \mid k)$ over $b \in \mathbb{R}$ is therefore equivalent to minimizing the scalar function
\begin{equation*}
f(b)
\;:=\;
\mathbb{E}\!\left[(R - b)^2 \|S_k\|^2 \,\middle|\, k\right]
\;=\;
\mathbb{E}[R^2 \|S_k\|^2 \mid k]
\;-\; 2b\, \mathbb{E}[R\, \|S_k\|^2 \mid k]
\;+\; b^2\, B_k.
\end{equation*}
This is a strictly convex quadratic in $b$ (since $f''(b) = 2 B_k > 0$), so the unique minimizer is given by the first-order condition $f'(b) = 0$:
\begin{equation*}
-2\, \mathbb{E}[R \|S_k\|^2 \mid k] \;+\; 2b\, B_k \;=\; 0
\quad\Longrightarrow\quad
b^\star_k \;=\; \frac{\mathbb{E}[R\, \|S_k\|^2 \mid k]}{B_k}.
\end{equation*}
To connect this to $\mu_k$, add and subtract $\mu_k$ in the numerator:
\begin{equation*}
b^\star_k
\;=\;
\frac{\mathbb{E}[((R - \mu_k) + \mu_k)\, \|S_k\|^2 \mid k]}{B_k}
\;=\;
\frac{\mu_k\, B_k + C_k}{B_k}
\;=\;
\mu_k + \frac{C_k}{B_k},
\end{equation*}
by the definitions of $B_k$ and $C_k$ in~\eqref{eq:moments}. 

\subsection{Proof of Theorem~\ref{thm:variance-gap}}
\label{subsec:proof-variance-gap}

\paragraph{Step 1: Reduction to conditional variances via the law of total variance.}
For each estimator $\hat{g} \in \{\hat{g}_{\mu_k}, \hat{g}_{\bar{\mu}}\}$, the matrix law of total covariance states
\[
\mathrm{Cov}(\hat g) \;=\; \mathbb{E}_{k\sim p}[\mathrm{Cov}(\hat g \mid k)] \;+\; \mathrm{Cov}_{k\sim p}(\mathbb{E}[\hat g \mid k]).
\]
Taking the trace of both sides and using linearity of trace yields the analogous identity for the trace-covariance scalar $\mathrm{Var}(\cdot)$ defined in Section~\ref{subsec:baseline-setup}:
\begin{equation}
\mathrm{Var}(\hat{g})
\;=\;
\mathbb{E}_{k \sim p}\!\left[\mathrm{Var}(\hat{g} \mid k)\right]
\;+\;
\mathrm{Var}_{k \sim p}\!\left[\mathbb{E}(\hat{g} \mid k)\right].
\label{eq:total-var}
\end{equation}
By Appendix~\ref{subsec:regularity}, $\mathbb{E}[\hat{g} \mid k] = \nabla_\theta J(\pi_\theta)$ for both estimators, which is $k$-independent. The second term in~\eqref{eq:total-var} therefore vanishes, yielding $\mathrm{Var}(\hat{g}) = \mathbb{E}_{k \sim p}[\mathrm{Var}(\hat{g} \mid k)]$.

\paragraph{Step 2: Conditional variance for a generic constant baseline.}
Fix $k$ and consider the estimator $\hat{g}_b = (R - b)\,S_k$ with $b$ a constant. Combining~\eqref{eq:trace-cov} with $\mathbb{E}[(R - b)\,S_k \mid k] = \nabla_\theta J(\pi_\theta)$ from~\eqref{eq:cond-mean},
\begin{equation}
\mathrm{Var}(\hat{g}_b \mid k)
\;=\;
\mathbb{E}\!\left[(R - b)^2 \|S_k\|^2 \,\middle|\, k\right]
\;-\; \|\nabla_\theta J(\pi_\theta)\|^2,
\label{eq:cond-var-generic}
\end{equation}
where the second term is independent of $b$.

\paragraph{Step 3: Algebraic decomposition of the second moment.}
For $b = \bar{\mu}$, complete the square around $\mu_k$:
\begin{equation*}
(R - \bar{\mu})^2
\;=\;
(R - \mu_k)^2
\;+\; (\mu_k - \bar{\mu})^2
\;+\; 2(R - \mu_k)(\mu_k - \bar{\mu}).
\end{equation*}
Multiplying by $\|S_k\|^2$ and taking conditional expectation, with $(\mu_k - \bar{\mu})$ a deterministic constant given $k$,
\begin{align}
\mathbb{E}[(R - \bar{\mu})^2 \|S_k\|^2 \mid k]
&= \underbrace{\mathbb{E}[(R - \mu_k)^2 \|S_k\|^2 \mid k]}_{= V_k}
\;+\; (\mu_k - \bar{\mu})^2\, \underbrace{\mathbb{E}[\|S_k\|^2 \mid k]}_{= B_k} \notag \\
&\quad +\; 2(\mu_k - \bar{\mu})\, \underbrace{\mathbb{E}[(R - \mu_k)\|S_k\|^2 \mid k]}_{= C_k} \notag \\
&= V_k + B_k(\mu_k - \bar{\mu})^2 + 2(\mu_k - \bar{\mu})\,C_k.
\label{eq:second-moment-PR}
\end{align}
For $b = \mu_k$, no completion is needed:
\begin{equation}
\mathbb{E}[(R - \mu_k)^2 \|S_k\|^2 \mid k] \;=\; V_k.
\label{eq:second-moment-DORA}
\end{equation}

\paragraph{Step 4: Aggregating over $k$ and subtracting.}
Combining Steps~1--3,
\begin{align*}
\mathrm{Var}(\hat{g}_{\bar{\mu}})
&= \mathbb{E}_{k\sim p}\!\left[V_k + B_k(\mu_k - \bar{\mu})^2 + 2(\mu_k - \bar{\mu})\,C_k\right]
\;-\; \|\nabla_\theta J(\pi_\theta)\|^2, \\
\mathrm{Var}(\hat{g}_{\mu_k})
&= \mathbb{E}_{k\sim p}[V_k] \;-\; \|\nabla_\theta J(\pi_\theta)\|^2.
\end{align*}
Subtracting, the $\mathbb{E}_p[V_k]$ terms and the $\|\nabla_\theta J(\pi_\theta)\|^2$ terms cancel exactly:
\begin{equation}
\Delta\mathrm{Var}
\;=\;
\mathbb{E}_{k\sim p}\!\left[B_k(\mu_k - \bar{\mu})^2\right]
\;+\;
2\,\mathbb{E}_{k\sim p}\!\left[(\mu_k - \bar{\mu})\,C_k\right].
\label{eq:gap-pre-cov}
\end{equation}

\paragraph{Step 5: Identifying the cross term as a covariance.}
Since $\mathbb{E}_{k\sim p}[\mu_k] = \bar{\mu}$, we have $\mathbb{E}_p[\mu_k - \bar{\mu}] = 0$. Therefore,
\begin{equation*}
\mathrm{Cov}_p(\mu_k, C_k)
\;=\;
\mathbb{E}_p[(\mu_k - \bar{\mu})(C_k - \mathbb{E}_p[C_k])]
\;=\;
\mathbb{E}_p[(\mu_k - \bar{\mu})\,C_k]
\;-\;
\underbrace{\mathbb{E}_p[\mu_k - \bar{\mu}]}_{= 0} \cdot \mathbb{E}_p[C_k]
\;=\;
\mathbb{E}_p[(\mu_k - \bar{\mu})\,C_k].
\end{equation*}
Substituting into~\eqref{eq:gap-pre-cov} yields the claimed identity:
\begin{equation*}
\Delta\mathrm{Var}
\;=\;
\mathbb{E}_{k\sim p}\!\left[B_k(\mu_k - \bar{\mu})^2\right]
\;+\;
2\,\mathrm{Cov}_p(\mu_k, C_k). 
\end{equation*}

\section{Social impacts}\label{sec:social-impact}
By detailing our methodology and experimental results in production environments, we aim to advance both research and industrial practices in the field of large language model (LLM) training. Through improving training efficiency, DORA has the potential to meaningfully reduce the environmental footprint associated with training LLMs, which typically demand hundreds or even thousands of accelerators. Furthermore, our methodology has been validated on large-scale, non-CUDA mid-range accelerators, broadening its applicability beyond conventional hardware ecosystems. This not only benefits large organizations seeking hardware flexibility but also contributes to the democratization of LLM training by making it more accessible on less advanced accelerators.



\end{document}